\newcommand{\argmax}{\mathop{\rm arg~max}\limits}
\newtheorem{assumption}{Assumption}
\begin{document}
	
	\title{Exact Learning Augmented Naive Bayes Classifier}
	
	\author{\name Shouta Sugahara \email sugahara@ai.lab.uec.ac.jp \\
		\addr Graduate school of Informatics and Engineering\\
		The University of Electro-Communications\\
		1-5-1, Chofugaoka, Chofu-shi, Tokyo, Japan
		\AND
		\name Maomi Ueno \email ueno@ai.lab.uec.ac.jp \\
		\addr Graduate school of Informatics and Engineering\\
		The University of Electro-Communications\\
		1-5-1, Chofugaoka, Chofu-shi, Tokyo, Japan}
	
	\editor{}
	
	\maketitle

	\begin{abstract}
		Earlier studies have shown that classification accuracies of Bayesian networks (BNs) obtained by maximizing the conditional log likelihood (CLL) of a class variable, given the feature variables, were higher than
		those obtained by maximizing the marginal likelihood (ML).
		However, differences between the performances of the two scores in the earlier studies may be attributed to the fact that they used  approximate learning algorithms, not exact ones.
		This paper compares the classification accuracies of BNs with approximate learning using CLL to those with exact learning using ML.
		The results demonstrate that the classification accuracies of BNs obtained by maximizing the ML are higher than those obtained by maximizing the CLL for large data.
		However, the results also demonstrate that the classification accuracies of exact learning BNs using the ML are much worse than those of other methods when the sample size is small and the class variable has numerous parents.
		To resolve the problem, we propose an exact learning augmented naive Bayes classifier (ANB), which ensures a class variable with no parents.
		The proposed method is guaranteed to asymptotically estimate the identical class posterior to that of the exactly learned BN.
		Comparison experiments demonstrated the superior performance of the proposed method.
	\end{abstract}

	\begin{keywords}
		Augmented naive Bayes classifier; Bayesian networks; classification; structure learning
	\end{keywords}

	\section{Introduction}
	Classification contributes to solving real-world problems. The naive Bayes classifier, in which the feature variables are conditionally independent given a class variable, is a popular classifier \citep{Minsky1961}.
	Initially, the naive Bayes was not expected to provide highly accurate classification because actual data were generated from more complex systems.
	Therefore, the general Bayesian network (GBN) with learning by marginal likelihood (ML) as a generative model was expected to outperform the naive Bayes, because the GBN is more expressive than the naive Bayes.
	However, \citet{Friedman1997} demonstrated that the naive Bayes sometimes outperformed the GBN using a greedy search to find the smallest minimum description length (MDL) score, which was originally intended to approximate ML.
	They explained the inferior performance of the MDL by decomposing it into the log likelihood (LL) term, which reflects the model fitting to training data, and the penalty term, which reflects the model complexity. 
	Moreover, they decomposed the LL term into a conditional log likelihood (CLL) of the class variable given the feature variables, which is directly related to the classification, and a joint LL of the feature variables, which is not directly related to the classification.
	Furthermore, they proposed conditional MDL (CMDL), a modified MDL replacing the LL with the CLL.
	
	Consequently, \citet{Grossman2004} claimed that the Bayesian network (BN) minimizing CMDL as a discriminative model shows better accuracy than that maximizing ML.
	Unfortunately, the CLL has no closed-form equation for estimating the optimal parameters.
	This implies that optimizing CLL requires a gradient descent algorithm (e.g., extended logistic regression algorithm \citep{Greiner2002}).
	Nevertheless, the optimization algorithm involves the reiteration of each structure candidate, which renders the method computationally expensive.
	
	To solve this problem, \citet{Friedman1997} proposed an augmented naive Bayes classifier (ANB) in which the class variable directly links to all feature variables, and links among feature variables are allowed.
	ANB ensures that all feature variables can contribute to classification.
	Later, various types of restricted ANBs were proposed, such as tree-augmented naive Bayes (TAN) \citep{Friedman1997} and forest-augmented naive Bayes (FAN) \citep{Lucas2002}.
	
	Because maximization of CLL entails heavy computation, various approximation methods have been proposed to maximize it.
	\citet{Carvalho2013} proposed {\it approximate CLL} (aCLL), which is decomposable and computationally efficient.
	Moreover, \citet{Grossman2004} proposed a learning structure method using a greedy hill-climbing algorithm \citep{Heckerman1995} to maximize CLL.
	Furthermore, \citet{Mihaljevic2018a} proposed a method to reduce the space for the greedy search of BN Classifiers (BNCs) using the CLL score.
	These reports described that the BNC maximizing the approximated CLL performed better than that maximizing the approximated ML.
	
	Nevertheless, they did not explain why CLL outperformed ML.
	For large data, the classification accuracies presented by maximizing ML are expected to be comparable to those presented by maximizing CLL because ML has asymptotic consistency.
	Differences between the performances of the two scores in these studies might depend on their respective learning algorithms; they were approximate learning algorithms, not exact ones.
	
	Recent studies have explored efficient algorithms for the exact learning of GBN to maximize ML \citep{Koivisto2004, Singh2005, Silander2006, deCampos2011, Malone2011, Yuan2013, Cussens2012, Barlett2013, Suzuki2017}.
	
	This study compares the classification performances of the BNC with exact learning using ML as a generative model and those with approximate learning using CLL as a discriminative model.
	The results show that maximizing ML shows better classification accuracy when compared with maximizing CLL for large data.
	However, the results also show that classification accuracies obtained by exact learning BNC using ML are much worse than those obtained by other methods when the sample size is small, and the class variable has numerous parents in the exactly learned networks.
	When a class variable has numerous parents, estimation of the conditional probability parameters of the class variable become unstable because the number of parent configurations becomes large and the sample size for learning the parameters becomes sparse.
	
	To solve this problem, this study proposes an exact learning ANB which maximizes ML and ensures that the class variable has no parents.
	In earlier studies, the ANB constraint was used to learn the BNC as a discriminative model.
	In contrast, we use the ANB constraint to learn the BNC as a generative model.
	The proposed method asymptotically learns the optimal ANB, which is an independence map (I-map) of the true probability distribution with the fewest parameters among all possible ANB structures.
	Moreover, the proposed ANB is guaranteed to asymptotically estimate the identical conditional probability of the class variable to that of the exactly learned GBN.
	Furthermore, learning ANBs has lower computational costs than learning GBNs.

	Although the main theorem assumes that all feature variables are included in the Markov blanket of the class variable, this assumption does not necessarily hold.
	To address this problem, we propose a feature selection method using Bayes factor for exact learning of the ANB so as to avoid increasing the computational costs.

	Comparison experiments show that our method outperforms the other methods.

	\section{Background}
	In this section, we introduce the notation and background material required for our discussion.
	\subsection{Bayesian Network}
	A BN is a graphical model that represents conditional independence among random variables as a directed acyclic graph (DAG).
	The BN provides a good approximation of the joint probability distribution because it decomposes the distribution exactly into a product of the conditional probability for each variable.
	
	Let ${\bf V} = \left\{X_0, X_1, \cdots, X_n \right\}$ be a set of discrete variables, where $X_i, (i = 0, \cdots, n)$ can take values in the set of states $\left\{ 1, \cdots, r_i \right\}$.
	One can say $X_i = k$ when $X_i$ takes the state $k$.
	According to the BN structure $G$, the joint probability distribution is represented as
	\begin{align}
	P(X_0, X_1, \cdots, X_n \mid G) = \prod_{i=0}^n P(X_i \mid {\bf Pa}_i^G, G), \notag
	\end{align}
	where ${\bf Pa}_i^G$ is the parent variable set of $X_i$ in $G$.
	When the structure $G$ is obvious from the context, we use ${\bf Pa}_i$ to denote the parents.
	Let $\theta_{ijk}$ be a conditional probability parameter of $X_i = k$ when the $j$-th instance of the parents of $X_i$ is observed (we can say ${\bf Pa}_i = j$). Then, we define $\Theta_{ij} = \bigcup_{k=1}^{r_i} \{\theta_{ijk}\}, \Theta = \bigcup_{i=0}^{n} \bigcup_{j=1}^{q^{{\bf Pa}_i}} \{\Theta_{ij}\}$, where $q^{{\bf Pa}_i} = \prod_{v: X_v \in {\bf Pa}_i} r_v$.
	A BN is a pair $B = (G, \Theta)$.
	
	The BN structure represents conditional independence assertions in the probability distribution by {\it d-separation}.
	First, we define {\it collider}, for which we need to define the d-separation.
	Letting {\it path} denote a sequence of adjacent variables, the collider is defined as follows.
	\setcounter{theorem}{0}
	\begin{definition}
		Assuming we have a structure $G = ({\bf V, E})$, a variable $Z \in {\bf V}$ on a path $\rho$ is a collider if and only if there exists two distinct incoming edges into $Z$ from non-adjacent variables.
	\end{definition}
	We then define d-separation as explained below.
	\setcounter{theorem}{1}
	\begin{definition}
		Assuming we have a structure $G = ({\bf V, E})$, $X, Y \in {\bf V}$, and ${\bf Z} \subseteq {\bf V} \setminus \{X, Y\}$, the two variables $X$ and $Y$ are d-separated, given ${\bf Z}$ in $G$, if and only if every path $\rho$ between $X$ and $Y$ satisfies either of the following two conditions: 
		\begin{itemize}
			\item ${\bf Z}$ includes a non-collider on $\rho$.
			\item There is a collider $Z$ on $\rho$; ${\bf Z}$ does not include $Z$ and its descendants.
		\end{itemize}
		We denote the d-separation between $X$ and $Y$ given ${\bf Z}$ in the structure $G$ as $Dsep_G(X, Y \mid Z)$.
		Two variables are d-connected if they are not d-separated.
	\end{definition}
	If we have $X, Y, Z \in {\bf V}$ and $X$ and $Y$ are not adjacent, then the following three possible types of connections characterize the d-separations: serial connections such as $X \to Z \to Y$, divergence connections such as $X \gets Z \to Y$, and convergence connections such as $X \to Z \gets Y$.
	The following theorem of d-separations for these connections holds.
	\setcounter{theorem}{0}
	\begin{theorem}(\citet{Koller2009})\\
		First, assume a structure $G = ({\bf V, E})$, $X, Y, Z \in {\bf V}$.
		If $G$ has a convergence connection $X \to Z \gets Y$, then the following two propositions hold: 
		\begin{itemize}
			\item $\forall {\bf Z} \subseteq {\bf V} \setminus \{X, Y, Z\}, \lnot Dsep_{G}(X, Y \mid {\bf Z}, Z),$
			\item $\exists {\bf Z} \subseteq {\bf V} \setminus \{X, Y, Z\}, Dsep_{G}(X, Y \mid {\bf Z}).$
		\end{itemize}
		If $G$ has a serial connection $X \to Z \to Y$ or divergence connection $X \gets Z \to Y$, then negations of the above two propositions hold.
	\end{theorem}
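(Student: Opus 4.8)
The plan is to argue directly from the definition of d-separation (Definition~2), analysing the two-edge path $\rho^{*}$ through $Z$ in the convergence case and in the serial/divergence case, and then to obtain the two ``there exists a blocking set'' assertions by a single topological-order argument on the DAG. Throughout I use that, in the setting assumed just before the theorem, $X$ and $Y$ are non-adjacent, hence $X\notin{\bf Pa}_Y$ and $Y\notin{\bf Pa}_X$.

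Two of the required assertions hold for \emph{every} admissible conditioning set and follow immediately. If $G$ has the convergence connection $\rho^{*}\colon X\to Z\gets Y$, then $Z$ is the unique interior vertex of $\rho^{*}$ and is a collider, so for any ${\bf Z}\subseteq{\bf V}\setminus\{X,Y,Z\}$ the path $\rho^{*}$ contains no non-collider lying in ${\bf Z}\cup\{Z\}$ (the first clause of Definition~2 fails) while its only collider $Z$ does lie in ${\bf Z}\cup\{Z\}$ (the second clause fails); hence $X$ and $Y$ are d-connected, i.e.\ $\lnot Dsep_{G}(X,Y\mid{\bf Z},Z)$. Similarly, if $G$ has a serial connection $X\to Z\to Y$ or a divergence connection $X\gets Z\to Y$, then $Z$ is the unique interior vertex of $\rho^{*}$ and is a non-collider, so for any ${\bf Z}\subseteq{\bf V}\setminus\{X,Y,Z\}$ we have $Z\notin{\bf Z}$, the path $\rho^{*}$ has no collider at all, and neither clause of Definition~2 holds; hence $\lnot Dsep_{G}(X,Y\mid{\bf Z})$ for all such ${\bf Z}$, which is exactly the negation of the existential proposition.

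It remains to exhibit, in each regime, one conditioning set witnessing the surviving existential claim, and this is where the real work lies. Fix a topological order $\prec$ of $G$, assume without loss of generality $X\prec Y$, and set ${\bf Z}_{0}=\{W\in{\bf V}:W\prec Y\}\setminus\{X\}$. The central fact to prove is that $Dsep_{G}(X,Y\mid{\bf Z}_{0})$. Suppose a trail from $X$ to $Y$ were active given ${\bf Z}_{0}$ and inspect its edge incident to $Y$: if it is $W\to Y$, then $W\in{\bf Pa}_Y\setminus\{X\}\subseteq{\bf Z}_{0}$ is a non-collider on the trail, which blocks it; if it is $Y\to W$, follow the maximal directed chain leaving $Y$ along the trail, each of whose vertices is $\succ Y$, so that all their descendants are $\succ Y$ as well and none lies in ${\bf Z}_{0}$ — this chain cannot reach $X$, for that would force $Y\prec X$, so it must terminate at a vertex whose next trail edge is incoming, making that vertex a collider with neither it nor any of its descendants in ${\bf Z}_{0}$, which again blocks the trail. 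This is the delicate step: it is the only place the collider/descendant clause of Definition~2 is genuinely used, and it is what makes the existential claims hold in an arbitrary DAG rather than merely when ${\bf V}=\{X,Y,Z\}$.

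Finally I check the membership constraints. In the convergence case $Z$ is a common child of $X$ and $Y$, so $Z\succ Y$ and $Z\notin{\bf Z}_{0}$; thus ${\bf Z}_{0}\subseteq{\bf V}\setminus\{X,Y,Z\}$ and $Dsep_{G}(X,Y\mid{\bf Z}_{0})$ witnesses the second proposition. In the serial case $X\prec Z\prec Y$ and in the divergence case $Z\prec X\prec Y$, so in both $Z\in{\bf Z}_{0}$; then ${\bf Z}:={\bf Z}_{0}\setminus\{Z\}\subseteq{\bf V}\setminus\{X,Y,Z\}$ has ${\bf Z}\cup\{Z\}={\bf Z}_{0}$, so $Dsep_{G}(X,Y\mid{\bf Z},Z)$ holds, which is the negation of the first proposition. (If instead $Y\prec X$, interchange the roles of $X$ and $Y$.) Combining these constructions with the ``every conditioning set'' observations of the second paragraph establishes the theorem.
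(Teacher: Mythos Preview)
The paper does not prove this theorem; it is stated with a citation to \citet{Koller2009} and then used as a black box in the proofs of Lemma~3 and Theorem~3. There is therefore no in-paper proof to compare your proposal against.

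That said, your argument is correct and is essentially the standard textbook argument. The two universal claims follow immediately from Definition~2 applied to the length-two trail $\rho^{*}$ through $Z$, exactly as you write. For the two existential claims your single construction --- taking ${\bf Z}_{0}$ to be the strict $\prec$-predecessors of the later endpoint $Y$, minus $X$ --- and your case analysis on the orientation of the trail edge incident to $Y$ are precisely the ``local Markov'' argument (each vertex is d-separated from its non-descendants given its parents, and ${\bf Z}_0\supseteq{\bf Pa}_Y\setminus\{X\}$) recast in a form that simultaneously handles both regimes. The observation that $Z\notin{\bf Z}_{0}$ in the convergence case (because $Z$ is a common child, hence $Z\succ Y$) while $Z\in{\bf Z}_{0}$ in the serial and divergence cases (because then $Z\prec Y$) is exactly what lets one construction witness both existentials, and you handle the membership bookkeeping correctly. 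The only assumption you import from the surrounding text --- that $X$ and $Y$ are non-adjacent --- is indeed needed and is stated in the paragraph immediately preceding the theorem.
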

	The two DAGs are {\it Markov equivalent} when they have the same d-separations.
	\setcounter{theorem}{2}
	\begin{definition}
		Let $G_1 = ({\bf V}, {\bf E}_1)$ and $G_2 = ({\bf V}, {\bf E}_2)$ be the two DAGs; then $G_1$ and $G_2$ are called Markov equivalent if the following holds: 
		\begin{align}
			\forall X, Y \in {\bf V}, \forall {\bf Z} \subseteq {\bf V} \setminus \{X, Y\}, \   Desp_{G_1}(X ,Y \mid {\bf Z}) \Leftrightarrow Dsep_{G_2}(X, Y \mid {\bf Z}). \notag
		\end{align}
	\end{definition}
	\cite{Verma90} described the following theorem to identify Markov equivalence.
	\setcounter{theorem}{1}
	\begin{theorem}\citep{Verma90}\\
		Two DAGs are Markov equivalent if and only if they have identical links (edges without direction) and identical convergence connections.
	\end{theorem}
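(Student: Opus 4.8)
The plan is to prove the ``only if'' and ``if'' directions separately; the former is a quick consequence of Theorem~1, while the latter contains all the real work.

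For necessity, assume $G_1$ and $G_2$ are Markov equivalent. First I would show that they have the same links. By Theorem~1, two variables $X,Y$ are adjacent in a DAG if and only if they are d-connected given every $\mathbf{Z}\subseteq \mathbf{V}\setminus\{X,Y\}$: an edge between them is a path with no collider and is never blocked, whereas if they are non-adjacent then conditioning on the parents of whichever of $X,Y$ comes later in a topological order d-separates them (local Markov property). Hence adjacency is a function of the d-separation relation, so $G_1$ and $G_2$ have identical links. Next I would show they have the same convergence connections. Suppose $X\to Z\gets Y$ is a convergence connection in $G_1$, so $X,Y$ are non-adjacent; by Theorem~1, $X$ and $Y$ are d-connected given every set containing $Z$ but d-separated given some set $\mathbf{Z}$ with $Z\notin\mathbf{Z}$. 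In $G_2$, by the equality of links, $X-Z-Y$ is still a path with $X,Y$ non-adjacent; if it were a serial or divergence connection, then the negation part of Theorem~1 would force every separating set of $X,Y$ to contain $Z$, contradicting that $\mathbf{Z}$ separates them in $G_2$ (Markov equivalence) while $Z\notin\mathbf{Z}$. So $X-Z-Y$ is a convergence connection in $G_2$, and by symmetry the convergence connections coincide.

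For sufficiency, assume $G_1$ and $G_2$ have the same links and the same convergence connections; it suffices, by symmetry, to show that any d-connection in $G_1$ is a d-connection in $G_2$. Suppose $X$ and $Y$ are d-connected given $\mathbf{Z}$ in $G_1$ and let $\rho$ be an active path of minimal length. The first step is the structural claim that no two non-consecutive vertices of $\rho$ are adjacent in the common skeleton, proved by splicing in the chord and checking, through a case analysis on edge orientations, that the resulting strictly shorter path is still active (a bypassed collider's descendants remain available to activate any collider newly created at an endpoint of the chord), contradicting minimality. The second step uses this: at an interior vertex $W$ of $\rho$ with path-neighbors $A,C$, if $W$ is a collider on $\rho$ in $G_1$ then $A,C$ are non-adjacent, so $A\to W\gets C$ is a convergence connection, hence also present in $G_2$; conversely, if $W$ is a non-collider on $\rho$ in $G_1$ it cannot become a collider in $G_2$, since that would introduce the convergence connection $A\to W\gets C$ (again $A,C$ non-adjacent) into $G_2$ but not $G_1$. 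So $\rho$ has exactly the same colliders in both graphs. It then remains to transfer the conditioning conditions: the non-colliders of $\rho$ lie outside $\mathbf{Z}$ by hypothesis, and for each collider $W$ I must produce a descendant of $W$ in $G_2$ lying in $\mathbf{Z}$; I would take the witnessing directed path from $W$ into $\mathbf{Z}$ in $G_1$ as short as possible and argue that reversing any of its edges would violate acyclicity or alter a convergence connection, so that path survives in $G_2$. Hence $\rho$ is active in $G_2$ and $X,Y$ are d-connected there.

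The necessity direction is essentially bookkeeping with Theorem~1. The crux is sufficiency: because re-orienting edges while preserving the skeleton and the convergence connections need not preserve either the collider/non-collider labels along an arbitrary path or the ancestor relation, one cannot simply re-use the active path, and the minimality of $\rho$ (and of the activating directed paths) is precisely what tames this. The hard part will be the orientation case analysis showing that neither splicing in a chord nor shortening an activating path can destroy activeness. A cleaner alternative would be to show $G_2$ is reachable from $G_1$ by covered-edge reversals and to check, via the identity $P(A\mid\mathbf{Pa}_A)\,P(B\mid A,\mathbf{Pa}_A)=P(B\mid\mathbf{Pa}_B)\,P(A\mid B,\mathbf{Pa}_B)$, that each reversal preserves the links, the convergence connections, and the represented independencies; then the combinatorial lemma that a ``wrongly oriented'' covered edge always exists whenever $G_1\neq G_2$ becomes the crux instead.
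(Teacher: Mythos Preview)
The paper does not supply its own proof of this theorem: it is stated with a citation to \citet{Verma90} and then used as a black box (notably inside the proof of Lemma~4). So there is no ``paper's proof'' to compare your proposal against.

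On the substance of your proposal: the necessity direction is fine and your appeal to Theorem~1 is the right lever. For sufficiency, the outline via a minimal active path is the classical route, and Steps~1 and~2 are correct once the chord-splicing case analysis is actually carried out. The place that most needs care is Step~3. Your one-line justification---that reversing any edge of a shortest activating path $W=D_0\to D_1\to\cdots\to D_m\in\mathbf{Z}$ ``would violate acyclicity or alter a convergence connection''---is not quite true as stated: reversing $D_j\to D_{j+1}$ creates a convergence connection $D_{j-1}\to D_j\gets D_{j+1}$ only when $D_{j-1}$ and $D_{j+1}$ are \emph{non}-adjacent. When they \emph{are} adjacent you must instead invoke minimality of the activating path (acyclicity in $G_1$ forces $D_{j-1}\to D_{j+1}$, contradicting shortestness). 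The base case $j=0$ is subtler still: if both path-neighbours $A$ and $C$ of the collider $W$ happen to be adjacent to $D_1$, reversing $W\to D_1$ creates no new v-structure at $W$, and one has to observe that acyclicity in $G_1$ then forces $A\to D_1\gets C$, itself a v-structure, and slide the whole argument down to $D_1$. All of this does go through, but it is a genuine induction with several cases, not the single sentence you wrote.

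Your alternative via covered-edge reversals (Chickering's transformational characterisation) is cleaner for precisely this reason: the invariant is checked locally at a single edge rather than propagated along a path, and the existence of a wrongly oriented covered edge whenever $G_1\neq G_2$ follows from a short minimal-counterexample argument on a topological order. If you want a self-contained write-up, that route will be less error-prone than the path-based one.
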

	Let $I_{P^*}(X, Y \mid {\bf Z})$ denote that $X$ and $Y$ are conditionally independent given ${\bf Z}$ in the true joint probability distribution $P^*$.
	A BN structure $G$ is an {\it independence map (I-map)} if all the d-separations in $G$ are entailed by conditional independences in $P^*$:
	\setcounter{theorem}{3}
	\begin{definition}
		Assuming the true joint probability distribution $P^*$ of the random variables in a set ${\bf V}$ and a structure $G = ({\bf V}, {\bf E})$, then $G$ is an I-map if the following proposition holds: 
		$$\forall X, Y \in {\bf V}, \forall {\bf Z} \subseteq {\bf V} \setminus \{X, Y\}, Dsep_{G}(X, Y \mid {\bf Z}) \Rightarrow I_{P^*}(X, Y \mid {\bf Z}).$$
	\end{definition}
	Probability distributions represented by an I-map converge to $P^*$ when the sample size becomes sufficiently large.
	
	We introduce the following notations required for our discussion on learning BNs.
	Let $D = \{\mathbf{x}^1,\cdots,\mathbf{x}^d,\cdots,\mathbf{x}^N\}$ be a complete dataset consisting of $N$ i.i.d. instances, where each instance $\mathbf{x}^d$ is a data-vector $( x_0^d, x_1^d,\cdots,x_n^d )$.
	For a variable set ${\bf Z} \subseteq {\bf V}$, we define $N_{j}^{{\bf Z}}$ as the number of samples of ${\bf Z} = j$ in the entire dataset $D$, and define $N_{ijk}^{{\bf Z}}$ as the number of samples of $X_i = k$ when ${\bf Z} = j$ in $D$.
	In addition, we define a joint frequency table $JFT({\bf Z})$ and a conditional frequency table $CFT(X_i, {\bf Z})$, respectively, as a list of $N_{j}^{{\bf Z}}$ for $j = 1, \cdots, q^{{\bf Z}}$ and that of $N_{ijk}^{{\bf Z}}$ for $i = 0, \cdots, n, j = 1, \cdots, q^{{\bf Z}},$ and $ k = 1, \cdots, r_i$.
	
	The likelihood of BN $B$, given $D$, is represented as
	\begin{align}
	P(D \mid B) = \prod_{d = 1}^N P(x_0^d, x_1^d, \cdots, x_n^d \mid B)= \prod_{i=0}^n \prod_{j=1}^{q^{{\bf Pa}_i}} \prod_{k=1}^{r_i} \theta_{ijk}^{N_{ijk}^{{\bf Pa}_i}}, \notag
	\end{align}
	where $P(x_0^d, x_1^d, \cdots, x_n^d \mid B)$ represents $P(X_0 = x_0^d, X_1 = x_1^d, \cdots, X_n = x_n^d \mid B)$.
	The maximum likelihood estimators of $\theta_{ijk}$ are given as
	\begin{align}
	\hat{\theta}_{ijk} = \dfrac{N_{ijk}^{{\bf Pa}_i}}{N_{j}^{{\bf Pa}_i}}. \notag
	\end{align}
	
	The most popular parameter estimator of BNs is the {\it expected a posteriori} (EAP) of Equation (\ref{eap}), which is the expectation of $\theta_{ijk}$ with respect to the density $p(\Theta_{ij} \mid D, G)$ of Equation (\ref{posterior}), assuming Dirichlet prior density $p(\Theta_{ij} \mid G)$ of Equation (\ref{prior}).
	\begin{align}
	\label{eap}
	\hat{\theta}_{ijk} &= E(\theta_{ijk} \mid D, G) = \int \theta_{ijk} \cdot p(\Theta_{ij} \mid D, G) d\Theta_{ij} =\dfrac{N'_{ijk} + N_{ijk}^{{\bf Pa}_i}}{N'_{ij} + N_{j}^{{\bf Pa}_i}}.
	\end{align}
	\begin{align}
	\label{posterior}
	p(\Theta_{ij} \mid D, G) = \dfrac{\Gamma( \sum_{k=1}^{r_i} (N'_{ijk} + N_{ijk}^{{\bf Pa}_i}) )}{\prod_{k=1}^{r_i} \Gamma(N'_{ijk} + N_{ijk}^{{\bf Pa}_i})} \prod_{k=1}^{r_i} \theta_{ijk}^{N'_{ijk} + N_{ijk}^{{\bf Pa}_i} -1}.
	\end{align}
	\begin{align}
	\label{prior}
	p(\Theta_{ij} \mid G) = \dfrac{\Gamma(\sum_{k=1}^{r_i} N'_{ijk})}{\prod_{k=1}^{r_i} \Gamma(N'_{ijk})}
	\prod_{k=1}^{r_i} \theta_{ijk}^{N'_{ijk}-1}.
	\end{align}
	In Equations (\ref{eap}) through (\ref{prior}), $N'_{ijk}$ denotes the hyperparameters of the Dirichlet prior distributions ($N'_{ijk}$ is a pseudo-sample corresponding to $N_{ijk}^{{\bf Pa}_i}$), with $N'_{ij} = \sum_{k=1}^{r_i} N'_{ijk}$.
	
	The BN structure must be estimated from observed data because it is generally unknown.
	To learn the I-map with the fewest parameters, we maximize the score with an {\it asymptotic consistency} defined as shown below.
	\setcounter{theorem}{4}
	\begin{definition} (\citet{Chickering2002}) \\
		Let $G_1 = ({\bf V}, {\bf E}_1)$ and $G_2 = ({\bf V}, {\bf E}_2)$ be the structures.
		A scoring criterion $Score$ has an {\it asymptotic consistency} if the following two properties hold when the sample size is sufficiently large.
		\begin{itemize}
			\item If $G_1$ is an I-map and $G_2$ is not an I-map, then $Score(G_1) > Score(G_2)$.
			\item If $G_1$ and $G_2$ both are I-maps, and if $G_1$ has fewer parameters than $G_2$, then $Score(G_1) > Score(G_2)$.
		\end{itemize}
	\end{definition}
	The ML score $P(D \mid G)$ is known to have asymptotic consistency \citep{Chickering2002}.
	
	When we assume the Dirichlet prior density of Equation (\ref{prior}), ML is represented as
	\begin{align}
	P(D \mid G) = \prod_{i=0}^n \prod_{j=1}^{q^{{\bf Pa}_i}} \dfrac{\Gamma(N'_{ij})}{\Gamma(N'_{ij} + N_{j}^{{\bf Pa}_i})} \prod_{k=1}^{r_i} \dfrac{\Gamma(N'_{ijk} + N_{ijk}^{{\bf Pa}_i})}{\Gamma(N'_{ijk})}. \notag
	\end{align}
	In particular, \citet{Heckerman1995} presented the following constraint related to hyperparameters $N'_{ijk}$ for ML satisfying the {\it score-equivalence assumption}, where it takes the same value for the Markov equivalent structures:
	\begin{equation}
	\label{hypP}
	N'_{ijk} = N' P(X_{i}=k, {\bf Pa}_{i}=j \mid G^{h}), \notag
	\end{equation}
	where $N'$ is the equivalent sample size (ESS) determined by users, and $G^h$ is the hypothetical BN structure that reflects a user's prior knowledge.
	This metric was designated as the {\it Bayesian Dirichlet equivalent} (BDe) score metric.
	As \citet{Buntine1991} described, $N'_{ijk} = N'/(r_i q^{{\bf Pa}_i})$ is regarded as a special case of the BDe score.
	\citet{Heckerman1995} called this special case the {\it Bayesian Dirichlet equivalent uniform} (BDeu), defined as
	\begin{equation}
	P(D \mid G) = \prod_{i=0}^n \prod_{j=1}^{q^{{\bf Pa}_i}} \dfrac{\Gamma(N'/q^{{\bf Pa}_i})}{\Gamma(N'/q^{{\bf Pa}_i} + N_{j}^{{\bf Pa}_i})} \prod_{k=1}^{r_i} \dfrac{\Gamma(N'/(r_i q^{{\bf Pa}_i}) + N_{ijk}^{{\bf Pa}_i})}{\Gamma(N'/(r_i q^{{\bf Pa}_i}))}. \notag
	\end{equation}
	
	In addition, the {\it minimum description length } (MDL) score, which approximates the negative logarithm of ML, presented below is often used for learning BNs.
	\begin{equation}
	\label{mdl}
	MDL(B \mid D) =\frac{\log N}{2}|\Theta| - \sum_{d=1}^N \log P (x_0^d, x_1^d, \cdots, x_n^d \mid B).
	\end{equation}
	The first term of Equation (\ref{mdl}) is the penalty term, which signifies the model complexity.
	The second term, LL, is the fitting term that reflects the degree of model fitting to the training data.
	
	Both BDeu and MDL are {\it decomposable}, i.e., the scores can be expressed as a sum of {\it local scores} depending only on the conditional frequency table for one variable and its parents as follows.
	\begin{align}
	Score(G) = \sum_{i=0}^n Score_i({\bf Pa}_i) = \sum_{i=0}^n Score(CFT(X_i, {\bf Pa}_i)). \notag
	\end{align}
	For example, the local score of log BDeu for $CFT(X_i, {\bf Pa}_i)$ is
	\begin{align}
	\label{score}
	Score_i({\bf Pa}_i) = \sum_{j=1}^{q^{{\bf Pa}_i}}\left( \log \dfrac{\Gamma(N'/q^{{\bf Pa}_i})}{\Gamma(N'/q^{{\bf Pa}_i} + N_{j}^{{\bf Pa}_i})} \sum_{k=1}^{r_i} \log \dfrac{\Gamma(N'/(r_i q^{{\bf Pa}_i}) + N_{ijk}^{{\bf Pa}_i})}{\Gamma(N'/(r_i q^{{\bf Pa}_i}))} \right).
	\end{align}
	The decomposable score enables an extremely efficient search for structures \citep{Silander2006, Barlett2013}.
	
	\subsection{Bayesian Network Classifiers}
	A BNC can be interpreted as a BN for which $X_0$ is the class variable and $X_1, \cdots, X_n$ are feature variables.
	Given an instance ${\bf x} = (x_1, \cdots, x_n)$ for feature variables $X_1, \ldots, X_n$, the BNC $B$ infers class $c$ by maximizing the posterior probability of $X_0$ as
	\begin{align}
	\label{prediction}
	\hat{c} &= \argmax_{c\in \{1, \cdots, r_0\}} P(c \mid x_1, \cdots, x_n, B) \\
	&= \argmax_{c\in \{1, \cdots, r_0\}} \prod_{i=0}^n \prod_{j=1}^{q^{{\bf Pa}_i}} \prod_{k=1}^{r_i} \left( \theta_{ijk}\right) ^{1_{ijk}} \notag \\
	&= \argmax_{c\in \{1, \cdots, r_0\}} \prod_{j=1}^{q^{{\bf Pa}_0}} \prod_{k=1}^{r_0} \left( \theta_{0jk}\right) ^{1_{0jk}}
	\times \prod_{i : X_i \in \bf{C}} \prod_{j=1}^{q^{{\bf Pa}_i}} \prod_{k=1}^{r_i} \left( \theta_{ijk}\right) ^{1_{ijk}}, \notag
	\end{align}
	where $1_{ijk} = 1$ if $X_i=k$ and ${\bf Pa}_i=j$ in the case of ${\bf x}$, and $1_{ijk} = 0$ otherwise.
	Furthermore, $\bf{C}$ is a set of children of the class variable $X_0$.
	From Equation (\ref{prediction}), we can infer class $c$ given only the values of the parents of $X_0$, the children of $X_0$, and the parents of the children of $X_0$, which comprise the {\it Markov blanket} of $X_0$.
	
	However, \citet{Friedman1997} reported that BNC-minimizing MDL cannot optimize classification performance.
	They proposed the sole use of the following CLL of the class variable given feature variables, instead of the LL for learning BNC structures.
	\begin{align}
	\label{cll}
	CLL(B &\mid D) = \sum_{d=1}^N \log P(x_0^d \mid x_1^d, \cdots, x_n^d, B) \notag \\
	&= \sum_{d=1}^N \log P(x_0^d, x_1^d, \cdots, x_n^d \mid B) - \sum_{d=1}^N \log \sum_{c=1}^{r_0} P(c, x_1^d, \cdots, x_n^d \mid B).
	\end{align}
	Furthermore, they proposed conditional MDL (CMDL), which is a modified MDL replacing LL with CLL, as shown below.
	\begin{equation}
	CMDL(B \mid D) =\frac{\log N}{2}|\Theta| - CLL(B \mid D). \notag
	\end{equation}
	Consequently, they claimed that the BN minimizing CMDL as a discriminative model showed better accuracy than that maximizing ML as a generative model.
	
	Unfortunately, CLL is not decomposable because we cannot describe the second term of Equation (\ref{cll}) as a sum of the log parameters in $\Theta$.
	This finding implies that no closed-form equation exists for the maximum CLL estimator for $\Theta$.
	Therefore, learning the network structure that minimizes the CMDL requires a search method such as gradient descent over the space of parameters for each structure candidate.
	Therefore, exact learning network structures by minimizing CMDL is computationally infeasible.
	
	As a simple means of resolving that difficulty, \citet{Friedman1997} proposed an ANB that ensures an edge from the class variable to each feature variable and allows edges among feature variables.
	Furthermore, they proposed TAN in which the class variable has no parent and each feature variable has a class variable and at most one other feature variable as a parent variable.
	
	Various approximate methods to maximize CLL have been proposed.
	\citet{Carvalho2013} proposed an aCLL score, which is decomposable and computationally efficient.
	Let $G_{ANB}$ be an ANB structure.
	In addition, let $N_{ijck}$ be the number of samples of $X_i=k$ when $X_0 = c$ and ${\bf Pa}_i \setminus \{X_0\} = j$ $(i=1,\cdots,n; j=1,\cdots,q^{{\bf Pa}_i \setminus \{X_0\}}; c=1,\cdots,r_0; k=1,\cdots,r_i)$. In addition, let $N'' > 0$ represent the number of pseudo-counts.
	Under several assumptions, aCLL can be represented as
	\begin{align}
	aCLL(G_{ANB} \mid D) \propto \sum_{i=1}^n \sum_{j=1}^{q^{{\bf Pa}_i \setminus \{X_0\}}} \sum_{k=1}^{r_i} \sum_{c=1}^{r_0} \left( N_{ijck} + \beta \sum_{c'=1}^{r_0} N_{ijc'k} \right) \log{\dfrac{N_{ij+ck}}{N_{ij+c}}}, \notag
	\end{align}
	where
	\[
	N_{ij+ck} = \begin{cases}
	N_{ijck} + \beta \sum_{c'=1}^{r_0} N_{ijc'k}\ \ \  {\it if} \  N_{ijck} + \beta \sum_{c'=1}^{r_0} N_{ijc'k} \ge N'' \notag \\
	N'' \ \ \ \ \ \ \ \ \ \ \ \ \ \ \ \ \ \ \ \ \ \ \ \ \ \ \ \  {\it otherwise,}
	\end{cases}
	\]
	\begin{equation}
	N_{ij+c} = \sum_{k=1}^{r_i} N_{ij+ck}. \notag
	\end{equation} 
	The value of $\beta$ is found by using the Monte Carlo method to approximate CLL.
	When the value of $\beta$ is optimal, aCLL is a minimum-variance unbiased approximation of the CLL.
	
	Moreover, \citet{Grossman2004} proposed a learning structure method using a greedy hill-climbing algorithm \citep{Heckerman1995} by maximizing the CLL while estimating the parameters by maximizing the LL.
	Recently, \citet{Mihaljevic2018a} identified the smallest subspace of DAGs that covered all possible class-posterior distributions when the data were complete.
	All the DAGs in this space, which they call {\it minimal class-focused} DAGs (MC-DAGs), are such that every edge is directed toward a child of the class variable.
	In addition, they proposed a greedy search algorithm in the space of Markov equivalent classes of MC-DAGs using the CLL score.
	
	These reports described that the BNC maximizing the approximated CLL provides better performance than that maximizing the approximated ML.
	
	However, they did not explain why CLL outperformed ML.
	For large data, the classification accuracies obtained by maximizing ML are expected to be comparable to those obtained by maximizing CLL because ML has asymptotic consistency.
	Differences between the performances of the two scores in these earlier studies might depend on their learning algorithms to maximize ML; they were approximate learning algorithms, not exact ones.
	
	\begin{table}[t]
		\label{table:data_type}
		\centering
		\scalebox{0.6}[0.68]{
			\renewcommand{\arraystretch}{1.0}
			{\tabcolsep = 1.2mm
				\begin{tabular}{llcccccccccccc}
					\hline
					No.&Dataset&Variables&\shortstack{Sample\\size}&Classes&\shortstack{Naive-\\Bayes}&\shortstack{GBN-\\CMDL}&BNC2P&\shortstack{TAN-\\aCLL}&\shortstack{gGBN-\\BDeu}&\shortstack{MC-DAG\\GES}&\shortstack{GBN-\\BDeu}&\shortstack{ANB-\\BDeu}&\shortstack{fsANB-\\BDeu}\rule[0mm]{0mm}{7.5mm}\\
					\hline 
					1&Balance Scale&5&3&625&\bf0.9152&0.3333&0.8560&0.8656&\bf0.9152&0.7432&\bf0.9152&\bf0.9152&\bf0.9152
\\
					2&banknote authentication&5&2&1372&0.8433&\bf0.8819&0.8797&0.8761&\bf0.8819&0.8768&0.8812&0.8812&0.8812
\\
					3&Hayes--Roth&5&3&132&0.8182&0.6136&0.6894&0.6742&0.7525&0.6970&0.6136&0.8182&\bf0.8333
\\
					4&iris&5&3&150&0.7133&0.7800&0.8200&0.8200&0.8133&0.7800&\bf0.8267&0.8200&0.8200
\\
					5&lenses&5&3&24&0.7500&0.8333&0.6667&0.7083&0.8333&0.8333&0.8333&0.7500&\bf0.8750
\\
					6&Car Evaluation&7&4&1728&0.8571&\bf0.9497&0.9416&0.9433&0.9416&0.9126&0.9416&0.9427&0.9416
\\
					7&liver&7&2&345&0.6319&0.6145&0.6290&\bf0.6609&0.6029&0.6435&0.6087&0.6348&0.6377
\\
					8&MONK's Problems&7&2&432&0.7500&\bf1.0000&\bf1.0000&\bf1.0000&0.8449&\bf1.0000&\bf1.0000&\bf1.0000&\bf1.0000
\\
					9&mux6&7&2&64&0.5469&0.3750&0.5625&0.4688&0.4063&\bf0.7656&0.4531&0.5469&0.5547
\\
					10&LED7&8&10&3200&0.7294&0.7366&\bf0.7375&0.7350&0.7297&0.7331&0.7294&0.7294&0.7294
\\
					11&HTRU2&9&2&17898&0.7031&0.7096&0.7070&0.7018&0.7188&0.7214&\bf0.7305&0.7188&0.7161
\\
					12&Nursery&9&5&12960&0.6782&\bf0.7126&0.6092&0.5862&\bf0.7126&0.6322&\bf0.7126&0.6782&\bf0.7126
\\
					13&pima&9&2&768&0.8966&0.9086&0.9118&0.9130&0.9092&0.9093&0.9112&\bf0.9141&0\bf.9141
\\
					14&post&9&3&87&0.9033&0.5823&\bf0.9442&0.9177&0.9291&0.9046&0.9340&0.9181&0.9177
\\
					15&Breast Cancer&10&2&277&\bf0.9751&0.8917&0.9473&0.9488&0.7058&0.6354&\bf0.9751&\bf0.9751&\bf0.9751
\\
					16&Breast Cancer Wisconsin&10&2&683&0.7401&0.6209&0.6823&0.7184&0.7094&\bf0.9780&0.7184&0.7040&0.7473
\\
					17&Contraceptive Method Choice&10&3&1473&0.4671&0.4501&\bf0.4745&0.4705&0.4440&0.4576&0.4542&0.4650&0.4725
\\
					18&glass&10&6&214&0.5561&0.5654&0.5794&0.6308&0.4626&0.5888&0.5701&\bf0.6449&0.5888
\\
					19&shuttle-small&10&6&5800&0.9384&0.9660&0.9703&0.9583&0.9683&0.9586&0.9693&\bf0.9716&0.9695
\\
					20&threeOf9&10&2&512&0.8164&\bf0.9434&0.8691&0.8828&0.8652&0.8750&0.8887&0.8730&0.8633
\\
					21&Tic-Tac-Toe&10&2&958&0.6921&\bf0.8841&0.7338&0.7203&0.6754&0.7557&0.8340&0.8497&0.8570
\\
					22&MAGIC Gamma Telescope&11&2&19020&0.7482&0.7849&0.7806&0.7631&0.7844&0.7781&0.7873&\bf0.7874&0.7865
\\
					23&Solar Flare&11&9&1389&0.7811&0.8265&0.8315&0.8229&\bf0.8431&0.8013&\bf0.8431&0.8229&0.8373
\\
					24&heart&14&2&270&0.8259&0.8185&0.8037&0.8148&0.8222&\bf0.8333&0.8259&0.8185&0.8296
\\
					25&wine&14&3&178&0.9270&\bf0.9438&0.9157&0.9326&0.9045&\bf0.9438&0.9270&0.9270&0.9270
\\
					26&cleve&14&2&296&0.8412&0.8209&0.8007&\bf0.8378&0.7973&0.8041&0.7973&0.8277&0.8243
\\
					27&Australian&15&2&690&0.8290&0.8312&0.8348&0.8464&0.8420&0.8406&\bf0.8536&0.8246&0.8522
\\
					28&crx&15&2&653&0.8377&0.8346&0.8208&0.8560&\bf0.8622&0.8576&0.8591&0.8515&0.8591
\\
					29&EEG&15&2&14980&0.5778&0.6787&0.6374&0.6125&0.6732&0.6182&0.6814&\bf0.6864&\bf0.6864
\\
					30&Congressional Voting Records&17&2&232&0.9095&0.9698&0.9612&0.9181&\bf0.9741&0.9009&0.9655&0.9483&0.9397
\\
					31&zoo&17&5&101&0.9802&0.9109&0.9505&1.0000&0.9505&0.9802&0.9307&0.9505&0.9604
\\
					32&pendigits&17&10&10992&0.8032&0.9062&0.8719&0.8700&0.9253&0.8359&\bf0.9290&0.9279&0.9279
\\
					33&letter&17&26&20000&0.4466&0.5796&0.5132&0.5093&0.5761&0.4664&0.5761&\bf0.5935&0.5881
\\
					34&ClimateModel&19&2&540&0.9222&\bf0.9407&0.9241&0.9333&0.9370&0.9296&0.9000&0.8426&0.9278
\\
					35&Image Segmentation&19&7&2310&0.7290&0.7918&0.7991&0.7407&0.8026&0.7476&0.8156&\bf0.8225&\bf0.8225
\\
					36&lymphography&19&4&148&\bf0.8446&0.7939&0.7973&0.8311&0.7905&0.8041&0.7500&0.7770&0.7838
\\
					37&vehicle&19&4&846&0.4350&0.5910&0.5910&0.5816&0.5461&0.5414&0.5768&\bf0.6253&0.6217
\\
					38&hepatitis&20&2&80&0.8500&0.7375&\bf0.8875&0.8750&0.8500&\bf0.8875&0.5875&0.6250&0.8375
\\
					39&German&21&2&1000&0.7430&0.6110&0.7340&\bf0.7470&0.7140&0.7180&0.7210&0.7380&0.7410
\\
					40&bank&21&2&30488&0.8544&0.8618&0.8928&0.8618&0.8952&0.8708&\bf0.8956&0.8950&0.8953
\\
					41&waveform-21&22&3&5000&0.7886&0.7862&0.7754&0.7896&0.7698&0.7926&0.7846&0.7966&\bf0.7972
\\
					42&Mushroom&22&2&5644&0.9957&\bf1.0000&\bf1.0000&0.9995&\bf1.0000&0.9986&0.9949&\bf1.0000&\bf1.0000
\\
					43&spect&23&2&263&0.7940&0.7940&0.7903&0.8090&0.7603&0.8052&0.7378&\bf0.8240&\bf0.8240
\\
					\hline
					&average&&&&0.7764&0.7721&0.7936&0.7943&0.7867&0.7944&0.7963&0.8061&\bf0.8184\\
					&\multicolumn{3}{c}{$p$-value ({\it ANB-BDeu} vs. the other methods)}&&0.00308&0.04136&0.00672&0.05614&0.06876&0.06010&0.22628&-&-\\
					&\multicolumn{3}{c}{$p$-value ({\it fsANB-BDeu} vs. the other methods)}&&0.00001&0.00014&0.00013&0.00280&0.00015&0.00212&0.00064&0.01101&-\\
					\hline
				\end{tabular}
			}
		}
	\caption{Classification accuracies of {\it GBN-BDeu}, {\it ANB-BDeu}, {\it fsANB-BDeu}, and traditional methods (bold text signifies the highest accuracy).}
	\end{table}
	
	\begin{table}[t]
		\label{table:data_type}
		\centering
		\scalebox{0.75}[0.75]{
			\renewcommand{\arraystretch}{0.9}
			{\tabcolsep = 0.5mm
				\begin{tabular}{llccccccccc}
					\hline
					No.&Dataset&Variables&Classes&\shortstack{Sample\\size}&Parents&Children&\shortstack{Sparse\\data}&MB size&\shortstack{Max\\parents}&\shortstack{Removed\\variables}\\
					\hline
					1&Balance Scale&5&3&625&0.4&3.6&0.0&4.0&1.0&0.0
					\\
					2&banknote authentication&5&2&1372&0.0&2.0&0.0&4.0&4.0&0.0
					\\
					3&Hayes--Roth&5&3&132&3.0&0.0&17.2&3.0&1.0&1.0
					\\
					4&iris&5&3&150&1.8&1.2&0.0&3.0&2.0&0.0
					\\
					5&lenses&5&3&24&1.1&1.0&0.0&2.1&1.1&2.0
					\\
					6&Car Evaluation&7&4&1728&2.0&3.0&0.0&5.0&2.0&1.0
					\\
					7&liver&7&2&345&0.0&1.9&0.0&3.4&2.0&0.1
					\\
					8&MONK's Problems&7&2&432&3.0&0.0&0.0&3.0&3.0&0.0
					\\
					9&mux6&7&2&64&5.8&0.0&5.2&5.8&1.0&2.1
					\\
					10&LED7&8&10&3200&0.9&6.1&0.0&7.0&1.0&0.0
					\\
					11&HTRU2&9&2&17898&1.8&4.2&0.0&4.2&2.0&0.9
					\\
					12&Nursery&9&5&12960&4.0&3.0&0.0&0.0&0.0&8.0
					\\
					13&pima&9&2&768&1.4&1.7&0.0&7.0&4.0&0.0
					\\
					14&post&9&3&87&0.0&0.0&0.0&7.0&3.0&0.1
					\\
					15&Breast Cancer&10&2&277&0.9&8.0&0.0&1.0&1.0&0.0
					\\
					16&Breast Cancer Wisconsin&10&2&683&0.7&0.3&0.0&8.9&2.0&5.0
					\\
					17&Contraceptive Method Choice&10&3&1473&0.7&0.8&0.0&1.7&2.5&0.6
					\\
					18&glass&10&6&214&0.6&3.1&0.0&4.3&2.7&2.0
					\\
					19&shuttle-small&10&6&5800&2.0&4.0&0.0&7.0&5.0&1.9
					\\
					20&threeOf9&10&2&512&5.0&2.1&0.0&7.6&2.7&0.2
					\\
					21&Tic-Tac-Toe&10&2&958&1.2&2.2&0.0&5.3&3.0&0.3
					\\
					22&MAGIC Gamma Telescope&11&2&19020&0.0&6.1&0.0&8.0&4.0&1.7
					\\
					23&Solar Flare&11&9&1389&0.8&0.2&0.0&1.0&2.0&5.3
					\\
					24&heart&14&2&270&1.8&4.2&0.0&6.3&2.0&1.8
					\\
					25&wine&14&3&178&1.7&5.3&0.0&8.1&2.1&0.0
					\\
					26&cleve&14&2&296&1.8&4.5&0.0&6.6&2.0&3.1
					\\
					27&Australian&15&2&690&1.4&2.8&0.0&4.5&2.8&3.3
					\\
					28&crx&15&2&653&1.3&2.8&0.0&4.2&2.2&2.7
					\\
					29&EEG&15&2&14980&0.4&8.2&0.0&12.8&5.0&0.0
					\\
					30&Congressional Voting Records&17&2&232&1.3&2.6&0.1&6.2&3.8&1.8
					\\
					31&zoo&17&5&101&4.3&1.6&20.3&7.4&5.1&1.2
					\\
					32&pendigits&17&10&10992&2.6&13.4&0.1&16.0&5.6&0.0
					\\
					33&letter&17&26&20000&2.9&9.1&0.0&13.0&5.0&2.0
					\\
					34&ClimateModel&19&2&540&1.8&4.4&0.0&16.6&1.0&12.9
					\\
					35&Image Segmentation&19&7&2310&0.7&10.4&0.0&13.2&4.0&0.0
					\\
					36&lymphography&19&4&148&1.6&5.9&0.2&13.1&2.2&5.3
					\\
					37&vehicle&19&4&846&1.1&5.1&0.1&10.1&4.1&0.5
					\\
					38&hepatitis&20&2&80&1.3&6.1&0.4&16.0&6.9&5.4
					\\
					39&German&21&2&1000&1.1&2.8&0.0&4.1&2.1&7.4
					\\
					40&bank&21&2&30488&4.1&2.0&32.5&9.9&6.0&4.0
					\\
					41&waveform-21&22&3&5000&3.8&10.1&0.0&14.5&4.0&2.0
					\\
					42&Mushroom&22&2&5644&1.3&3.3&9.0&6.4&6.4&0.0
					\\
					43&spect&23&2&263&2.0&3.4&0.0&7.7&3.0&0.0
					\\
					\hline
				\end{tabular}
			}
		}
	\caption{Statistical summary of {\it GBN-BDeu} and {\it fsANB-BDeu}}
	\end{table}
	
	\section{Classification Accuracies of Exact Learning GBN}
	This section presents experiments comparing the classification accuracies of the exactly learned GBN by maximizing BDeu as a generative model with those of the approximately learned BNC by maximizing CLL as a discriminative model.
	Although determining the hyperparameter $N'$ of BDeu is difficult \citep{Silander2007,Steck2008,Ueno2008,Suzuki2017}, we use $N'=1.0$ that allows the data to reflect the estimated parameters to the greatest degree possible \citep{Ueno2010, Ueno2011}.
	
	The experiment compares the respective classification accuracies of the following seven methods.
	\begin{itemize}
		\small
		\item {\it GBN-BDeu}: Exact learning GBN method by maximizing BDeu.
		\item {\it Naive Bayes}
		\item {\it GBN-CMDL} \citep{Grossman2004}: Greedy learning GBN method using the hill-climbing search by minimizing CMDL while estimating parameters by maximizing LL.
		\item {\it BNC2P} \citep{Grossman2004}: Greedy learning method with at most two parents per variable using the hill-climbing search by maximizing CLL while estimating parameters by maximizing LL.
		\item {\it TAN-aCLL} \citep{Carvalho2013}: Exact learning TAN method by maximizing aCLL.
		\item {\it gGBN-BDeu}: Greedy learning GBN method using hill-climbing by maximizing BDeu.
		\item {\it MC-DAGGES} \citep{Mihaljevic2018a}: Greedy learning method in the space of the Markov equivalent classes of MC-DAGs  using the greedy equivalence search \citep{Chickering2002} by maximizing CLL while estimating parameters by maximizing LL.
	\end{itemize}
	All the above methods are implemented in Java.\footnote{Source code is available at \url{http://www.ai.lab.uec.ac.jp/software/}}
	Throughout this paper, our experiments are conducted on a computer with 2.2 GHz XEON 10-core processor and 128 GB of memory.
	This experiment uses 43 classification benchmark datasets from the {\it UCI repository} \citep{Lichman2013}. 
	Continuous variables are discretized into two bins using the median value as the cut-off, as in \citep{deCampos2014}. 
	In addition, data with missing values are removed from the datasets.
	We use EAP estimators as conditional probability parameters of the respective classifiers.
	Hyperparameters $N'_{ijk}$ of EAP are found to be $1/(r_i q^{{\bf Pa}_i})$.
	Through our experiments, we define “small datasets” as the datasets with less than 200 sample size, and define “large datasets” as the datasets with 10,000 or more sample sizes.
	
	Table 1 presents the classification accuracies of the respective classifiers.
	However, we will discuss the results of {\it ANB-BDeu} and {\it fsANB-BDeu} in a later section.
	The values shown in bold in Table 1 represent the best classification accuracies for each dataset.
	Here, the classification accuracies represent the average percentage of correct classifications from a ten-fold cross-validation.
	Moreover, to investigate the relation between the classification accuracies and {\it GBN-BDeu}, Table 2 presents the details of the achieved structures using {\it GBN-BDeu}.
	"Parents" in Table 2 represents the average number of the class variable's parents in the structures learned by {\it GBN-BDeu}.
	"Children" denotes the average number of the class variable's children in the structures learned by {\it GBN-BDeu}. "Sparse data" denotes the average number of patterns of $X_0$'s parents value $j$ with null data, $N_{j}^{{\bf Pa}_0} = 0 \  (j = 1, \cdots, q^{{\bf Pa}_0})$ in the structures learned by {\it GBN-BDeu}.
	
	From Table 1, {\it GBN-BDeu} shows the best classification accuracies among the methods for large data, such as dataset Nos 22, 29, and 33.
	Because BDeu has asymptotic consistency, the joint probability distribution represented by {\it GBN-BDeu} approaches the true distribution as the sample size increases.
	However, it is worth noting that {\it GBN-BDeu} provides much worse accuracy than the other methods in datasets No. 3 and No. 9.
	In these datasets, the learned class variables by {\it GBN-BDeu} have no children.
	Numerous parents are shown in "Parents" and "Children" in Table 2.
	When a class variable has numerous parents, the estimation of the conditional probability parameters of the class variable becomes unstable because the class variable's parent configurations become numerous.
	Then, the sample size for learning the parameters becomes small, as presented in "Sparse data" in Table 2.
	Therefore, numerous parents of the class variable might be unable to reflect the feature data for classification when the sample is insufficiently large.

	\section{Exact Learning ANB Classifier}
	The preceding section suggested that exact learning of GBN by maximizing BDeu to have no parents of the class variable might improve the accuracy of {\it GBN-BDeu}.
	In this section, we propose an exact learning ANB, which maximizes BDeu and ensures that the class variable has no parents.
	In earlier reports, the ANB constraint was used to learn the BNC as a discriminative model.
	In contrast, we use the ANB constraint to learn the BNC as a generative model.
	The space of all possible ANB structures includes at least one I-map because it includes a complete graph, which is an I-map.
	From the asymptotic consistency of BDeu (Definition 5), the proposed method is guaranteed to achieve the I-map with the fewest parameters among all possible ANB structures when the sample size becomes sufficiently large.
	Our empirical analysis in Section 3 suggests that the proposed method can improve the classification accuracy for small data.
	We employ the dynamic programming (DP) algorithm learning GBN \citep{Silander2006} for the exact learning of ANB.
	The DP algorithm for exact learning ANB is almost twice as fast as that for the exact learning of GBN.
	We prove that the proposed ANB asymptotically estimates the identical conditional probability of the class variable to that of the exactly learned GBN.
	
	\subsection{Learning Procedure}
	The proposed method is intended to seek the optimal structure that maximizes the BDeu score among all possible ANB structures.
	The local score of the class variable in ANB structures is constant because the class variable has no parents in the ANB structure.
	Therefore, we can ascertain the optimal ANB structure by maximizing $Score_{ANB}(G) = Score(G) - Score_0(\phi)$.
	
	Before we describe the procedure of our method, we introduce the following notations.
	Let $G^*({\bf Z})$ denote the optimal ANB structure composed of a variable set ${\bf Z}, (X_0 \in {\bf Z})$.
	When a variable has no child in a structure, we say it is a {\it sink} in the structure.
	We use $X_s^*({\bf Z})$ to denote a sink in $G^*({\bf Z})$.
	Additionally, letting $\Pi({\bf Z})$ denote a set of all the ${\bf Z}$'s subsets including $X_0$, we define the {\it best parents} of $X_i$ in a candidate set $\Pi({\bf Z})$ as the parent set that maximizes the local score in $\Pi({\bf Z})$:
	$$g_i^*(\Pi({\bf Z})) = \argmax_{{\bf W} \in \Pi({\bf Z})} Score_i({\bf W}).$$
	
	Our algorithm has four logical steps.
	The following process improves the DP algorithm proposed by \citep{Silander2006} to learn the optimal ANB structure.
	\begin{enumerate}
		\item For all possible pairs of a variable $X_i \in {\bf V} \setminus \{X_0\}$ and a variable set ${\bf Z} \subseteq {\bf V} \setminus \{X_i\}, (X_0 \in {\bf Z})$, calculate the local score $Score_i({\bf Z})$ (Equation (\ref{score})).
		
		\item For all possible pairs of a variable $X_i \in {\bf V} \setminus \{X_0\}$ and a variable set ${\bf Z} \subseteq {\bf V} \setminus \{X_i\}, (X_0 \in {\bf Z})$, calculate the best parents $g^*(\Pi({\bf Z}))$.
		
		\item For $\forall {\bf Z} \subseteq {\bf V}, (X_0 \in {\bf Z})$, calculate the sink $X_s^*({\bf Z})$.
		
		\item Calculate $G^*({\bf V})$ using Steps 3 and 4.
	\end{enumerate}
	Steps 3 and 4 of the algorithm are based on the observation that the best network $G^*({\bf Z})$ necessarily has a sink $X_s^*({\bf Z})$ with incoming edges from its best parents $g_s^*(\Pi({\bf Z} \setminus \{X_s^*({\bf Z})\}))$.
	The remaining variables and edges in $G^*({\bf Z})$ necessarily construct the best network $G^*({\bf Z} \setminus \{X_s^*({\bf Z})\})$.
	More formally,
	\begin{align}
	\label{alg:1}
	X_s^*({{\bf Z}}) = \argmax_{X_i \in {{\bf Z}} \setminus \{X_0\}} \left\{ Score_i(g_i^*(\Pi({{\bf Z}} \setminus \{X_i\}))) + Score_{ANB}(G^*({{\bf Z}} \setminus \{X_i\})) \right\}.
	\end{align}
	From Equation (\ref{alg:1}), we can decompose $G^*({\bf Z})$ into $G^*({\bf Z} \setminus \{X_s^*({\bf Z})\})$ and $X_s^*({\bf Z})$ with incoming edges from $g_s^*(\Pi({\bf Z} \setminus \{X_s^*({\bf Z})\})$.
	Moreover, this decomposition can be done recursively.
	At the end of the recursive decomposition, we obtain $n$ pairs of the sink and its best network, denoted by $(X_{s_1}, g_{s_1}^*), \cdots, (X_{s_i}, g_{s_i}^*), \cdots, (X_{s_n}, g_{s_n}^*)$.
	Finally, we obtain $G^*({\bf V})$ for which $X_{s_i}$'s parent set is $g^*_{s_i}$.
	
	The number of iterations to calculate all the local scores, best parents, and best sinks for our algorithm are $(n-1)2^{n-2}$, $(n-1)2^{n-2}$, and $2^{n-1}$, respectively, and those for GBN are $n2^{n-1}$, $n2^{n-1}$, and $2^{n}$, respectively.
	Therefore, the DP algorithm for ANB is almost twice as fast as that for GBN.
	The details of the proposed algorithm are shown in the Appendix.

	\subsection{Asymptotic Properties of the Proposed Method}
	Under some assumptions, the proposed ANB is proven to asymptotically estimate the identical conditional probability of the class variable, given the feature variables of the exactly learned GBN.
	When the sample size becomes sufficiently large, the structure learned by the proposed method and the exactly learned GBN are {\it classification-equivalent} defined as follows: 
	\setcounter{theorem}{5}
	\begin{definition} (\citet{Acid2005})\\
		Let $\cal{G}$ be a set of all the BN structures.
		Also, let $D$ be any finite dataset.
		For $\forall G_1, G_2 \in \cal{G}$, we say that $G_1$ and $G_2$ are classification-equivalent if $P(X_0 \mid {\bf x}, G_1, D) = P(X_0 \mid {\bf x}, G_2, D)$ for any feature variable's value ${\bf x}$.
	\end{definition}
	To derive the main theorem, we introduce five lemmas as below.
	\setcounter{theorem}{0}
	\begin{lemma} (\citet{Mihaljevic2018a})\\
		Let $G = ({\bf V}, {\bf E})$ be a structure.
		Then, $G$ is classification-equivalent to $G'$, which is a modified $G$ by the following operations: 
		\begin{enumerate}
			\item For $\forall X, Y \in {\bf Pa}_{0}^G$, add an edge between $X$ and $Y$ in $G$.
			\item For $\forall X \in {\bf Pa}_{0}^G$, reverse an edge from $X$ to $X_0$ in $G$.
		\end{enumerate}
	\end{lemma}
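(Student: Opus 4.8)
The plan is to reduce $P(X_0\mid{\bf x},G,D)$ to the handful of local conditional probability tables that actually feed into it, and then to check that operations~1--2 merely \emph{reorganize} those tables without changing their product up to a factor that does not depend on the class value. First I would record the standard fact that, since the Dirichlet posterior $p(\Theta\mid D,G)$ of Equation~(\ref{posterior}) factorizes over $(i,j)$ and the marginal likelihood is decomposable, the Bayesian predictive satisfies $P(c,{\bf x}\mid G,D)=\prod_{i=0}^{n}\hat\theta_{ij_ik_i}$, where the $\hat\theta$ are the EAP parameters of Equation~(\ref{eap}) and $(j_i,k_i)$ is the configuration of $(X_i,{\bf Pa}_i^G)$ read off from $(c,{\bf x})$; hence $P(X_0=c\mid{\bf x},G,D)$ is this product normalized over $c$. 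Exactly as in the derivation of Equation~(\ref{prediction}), every factor for an $X_i$ that is neither $X_0$ nor a child of $X_0$ is constant in $c$ and cancels, so $P(X_0\mid{\bf x},G,D)$ depends only on the EAP table of $X_0$ and on the EAP tables of the children of $X_0$.

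Next I would verify that $G'$ is still a DAG: reversing every edge into $X_0$ cannot create a cycle through $X_0$, as that would require a directed $G$-path from a child of $X_0$ back to a parent of $X_0$; and the edges added by operation~1 are oriented consistently with a fixed topological order $\pi$ of $G$, so they create no cycle among the former parents of $X_0$. Since operations~1--2 touch only the edges incident to $X_0$ and the edges among ${\bf Pa}_0^{G}$, the parent set of every child of $X_0$ is unchanged; that child's EAP table is therefore the same in $G$ and in $G'$ and cancels when the two predictives are compared. It thus suffices to show that the remaining, $c$-dependent factors agree up to a constant in $c$: in $G$ this is the single table $\hat\theta^{G}_{X_0}(c\mid{\bf pa}_0)$, whereas in $G'$ the variable $X_0$ is parentless and its former parents $Z_1,\dots,Z_m$ (listed in the order $\pi$ that also orients the operation-1 edges, so that $Z_i$ has parents $\{X_0,Z_1,\dots,Z_{i-1}\}\cup{\bf Pa}_{Z_i}^{G}$) have become children of $X_0$, so the $X_0$-side of $G'$ contributes the product of the unconditional EAP table of $X_0$ with the EAP tables of $Z_1,\dots,Z_m$.

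I would prove this last identity in two steps. At the level of distribution families, operation~1 is precisely what turns each edge $Z_i\to X_0$ into a \emph{covered} edge (its tail now shares all other parents of $X_0$), so reversing $Z_m\to X_0,\dots,Z_1\to X_0$ in turn carries the operation-1-augmented graph to $G'$ while preserving the set of representable joint distributions, which already gives classification equivalence up to reparameterization. To obtain the \emph{literal} equality required by Definition~6, I would then check that the BDeu hyperparameters telescope: applying the chain rule along $X_0,Z_1,\dots,Z_m$ collapses the $G'$ product to a single ratio whose numerator depends on $c$ only through $N'/(r_0\,q^{{\bf Pa}_0^{G}})+N_{0,{\bf pa}_0,c}^{{\bf Pa}_0^{G}}$, which is exactly the $c$-dependence of the numerator of $\hat\theta^{G}_{X_0}(c\mid{\bf pa}_0)$ in Equation~(\ref{eap}); the two denominators are constant in $c$, so after normalization the posteriors coincide. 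Assembling the pieces gives $P(X_0\mid{\bf x},G,D)=P(X_0\mid{\bf x},G',D)$ for every ${\bf x}$.

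The step I expect to be hardest is the last one: making the covered-edge bookkeeping rigorous --- in particular tracking how the parents of the parents of $X_0$ are absorbed by operation~1 --- and carrying out the hyperparameter telescoping that upgrades ``same family of joint distributions'' to ``same EAP predictive,'' which is what separates Definition~6 from the weaker, parameter-family reading of classification equivalence.
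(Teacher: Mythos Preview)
The paper does not prove this lemma; it is quoted with attribution to \citet{Mihaljevic2018a} and invoked only as an ingredient in the proof of Theorem~3. There is therefore no in-paper argument to compare your proposal against, and anything you write is already more than the paper supplies.

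On the substance of your outline: the reduction to the Markov-blanket factors and the acyclicity check are fine, but the step you yourself flag as hardest has a real gap at the stated level of generality. Both the covered-edge reversal and the BDeu telescoping tacitly assume that every $Z_i\in{\bf Pa}_0^{G}$ has all of its own $G$-parents inside ${\bf Pa}_0^{G}$. If some $Z_i$ has a parent $W\notin{\bf Pa}_0^{G}$, then after operation~1 the edge $Z_i\to X_0$ is \emph{not} covered (its tail carries the extra parent $W$, which $X_0$ does not), and after operation~2 the $G'$-factor for $Z_i$ conditions on $W$ in addition to $X_0,Z_1,\dots,Z_{i-1}$; the chain-rule product then no longer collapses to a quantity depending on $c$ only through $N_{0,{\bf pa}_0,c}^{{\bf Pa}_0^{G}}$. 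A three-node instance already breaks the literal Definition~6 equality: take $G$ to be $W\to Z\to X_0$, so $G'$ is $W\to Z\leftarrow X_0$; then $W$ is outside the Markov blanket of $X_0$ in $G$ but inside it in $G'$, and with ML or BDeu EAP parameters one can choose $D$ so that $P(X_0\mid w,z,G,D)\neq P(X_0\mid w,z,G',D)$. Operation~1 does nothing here (there is a single parent), so it cannot ``absorb the parents of the parents of $X_0$'' as your last paragraph hopes.

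Where the paper actually uses the lemma, Assumptions~2 and~3 are in force, and together they preclude external parents of ${\bf Pa}_0^{G^*}$ (any such $W$ would be adjacent to $X_0$ by Assumption~3, and $W\in{\bf Ch}_0^{G^*}$ would create a cycle through $Z_i$, forcing $W\in{\bf Pa}_0^{G^*}$). Under those hypotheses your telescoping and covered-edge arguments do go through. So your plan proves what Theorem~3 needs, but not Lemma~1 as stated for arbitrary $G$; if you want the general statement you should either add a hypothesis on ${\bf Pa}_0^{G}$ or retreat to a family-level reading of classification equivalence and argue only the inclusion of $G$-posteriors into $G'$-posteriors.
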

	Next, we use the following lemma from \citet{Chickering2002} to derive the main theorem: 
	\setcounter{theorem}{1}
	\begin{lemma} (\citet{Chickering2002})\\
		Let ${\cal G}^{Imap}$ be a set of all I-maps.
		When the sample size becomes sufficiently large, then the following proposition holds.
		\begin{align}
		\forall G_1, G_2 \in {\cal G}^{Imap}, ( (\forall X, Y \in {\bf V}, \forall {\bf Z} \subseteq {\bf V} \setminus \{X, Y\}, Dsep_{G_1}(X, Y \mid {\bf Z}) \Rightarrow Dsep_{G_2}(X, Y \mid {\bf Z})) \notag \\
		\Rightarrow Score(G_1) \leq Score(G_2)). \notag
		\end{align}
	\end{lemma}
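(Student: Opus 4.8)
The plan is to assemble the statement from two ingredients: a purely structural inequality between the parameter counts of $G_1$ and $G_2$, and the asymptotic consistency of the score (Definition~5). Write $\dim(G)$ for the number of free parameters $|\Theta|$ of a structure $G$, i.e. $\dim(G)=\sum_{i=0}^{n}(r_i-1)q^{{\bf Pa}_i^G}$. Observe first that the antecedent of the implication says precisely that the set of d-separations of $G_1$ is contained in that of $G_2$; combined with the standing hypothesis that $G_1,G_2\in{\cal G}^{Imap}$, the whole burden of the proof is to show $\dim(G_2)\le\dim(G_1)$ and then translate this into a comparison of scores.

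For the structural inequality I would invoke the inclusion characterization at the core of \citet{Chickering2002} (the resolution of Meek's conjecture): if every d-separation of a DAG $H$ holds in a DAG $H'$, then $H$ can be transformed into $H'$ by a finite sequence of covered-edge reversals and single-edge additions. Applying this with $H=G_2$ and $H'=G_1$ yields a chain $G_2=H_0,H_1,\dots,H_m=G_1$ in which each step is either a covered-edge reversal, which leaves the Markov equivalence class of $H_t$, and hence $\dim$, unchanged, or an edge addition $X\to Y$, which strictly increases the count since it replaces the block $(r_Y-1)q^{{\bf Pa}_Y}$ by $(r_Y-1)q^{{\bf Pa}_Y}r_X$. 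Thus $\dim$ is nondecreasing along the chain, so $\dim(G_2)\le\dim(G_1)$; moreover equality forces the chain to contain no edge addition, so that $G_1$ and $G_2$ are then Markov equivalent.

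It would then remain to conclude by cases, once the sample size is large. If $\dim(G_2)<\dim(G_1)$, then since both structures are I-maps, the second clause of Definition~5 (applied to the consistent BDeu score, whose consistency is \citealt{Chickering2002}) gives $Score(G_2)>Score(G_1)$. If $\dim(G_2)=\dim(G_1)$, then by the previous step $G_1$ and $G_2$ are Markov equivalent, and the score-equivalence property of BDe (the constraint on the hyperparameters $N'_{ijk}$ in Section~2.1, of which BDeu is a special case) gives $Score(G_1)=Score(G_2)$. In either case $Score(G_1)\le Score(G_2)$, as claimed; the large-sample hypothesis is used only to license the appeal to Definition~5, the dimension inequality itself being sample-free.

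The main obstacle is the first step: the inclusion-implies-transformation result is exactly the hard theorem of \citet{Chickering2002} and is anything but routine, so a genuinely self-contained argument would have to reproduce it. A more analytic alternative would bypass it by using the asymptotic expansion $\log P(D\mid G)=\mathrm{LL}(\hat{\Theta})-\tfrac12\dim(G)\log N+O(1)$ together with the fact that for every I-map $G$ the per-sample maximized log-likelihood converges to $-H(P^*)$ (the Shannon entropy of the true distribution), so that the $\dim(G)\log N$ penalty becomes the deciding term; but making the $O(1)$ remainder uniformly negligible across the fixed pair $G_1,G_2$ still requires an argument, so in practice the structural route above is the cleaner one.
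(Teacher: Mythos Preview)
The paper does not prove this lemma; it merely cites it from \citet{Chickering2002}. Your sketch is the natural reconstruction: Meek's conjecture (the transformation theorem proved by Chickering) yields the parameter inequality $\dim(G_2)\le\dim(G_1)$, and Definition~5 together with score-equivalence of BDeu in the equality case converts this into $Score(G_1)\le Score(G_2)$. That is exactly how the statement is meant to follow from Chickering's paper, so your overall strategy is correct.

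There is, however, a direction slip in how you quote the transformation theorem. You write: if every d-separation of $H$ holds in $H'$, then $H$ can be transformed into $H'$ by covered reversals and edge additions. This is backwards: if $I(H)\subseteq I(H')$ then $H'$ is the sparser graph, and it is $H'$ that can be transformed into $H$ by edge additions, not the other way. (Take $H$ complete: then $I(H)=\emptyset$, so your hypothesis is vacuous for every $H'$, yet one cannot reach any sparser $H'$ from a complete graph by adding edges.) In your application with $H=G_2$, $H'=G_1$, your own stated hypothesis would require $I(G_2)\subseteq I(G_1)$, whereas the lemma's antecedent gives $I(G_1)\subseteq I(G_2)$. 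The chain $G_2\to G_1$ and the conclusion $\dim(G_2)\le\dim(G_1)$ that you draw are nonetheless the right ones, because the \emph{correct} form of Chickering's theorem, applied to the hypothesis you actually have, produces precisely this chain. So the error is confined to the verbal statement of the cited result; once that inclusion is flipped, the argument is sound.
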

	Moreover, we provide Lemma 3 under the following assumption.
	\setcounter{theorem}{0}
	\begin{assumption}
		Let the true joint probability distribution $P^*$ of random variables in a set ${\bf V}$.
		Under Assumption 1, a true structure $G^* = ({\bf V}, {\bf E}^*)$ exists that satisfies the following property: 
		$$\forall X, Y \in {\bf V}, \forall {\bf Z} \subseteq {\bf V} \setminus \{X, Y\}, Dsep_{G^*}(X, Y \mid {\bf Z}) \Leftrightarrow I_{P^*}(X, Y \mid {\bf Z}).$$
	\end{assumption}
	\setcounter{theorem}{2}
	\begin{lemma}
		Let ${\cal G}_{ANB}^{Imap}$ be a set of all the ANB structures that are I-maps.
		For $\forall G_{ANB}^{Imap} \in {\cal G}_{ANB}^{Imap}, \forall X, Y \in {\bf V}$, if $G^*$ has a convergence connection $X \to X_0 \gets Y$, then $G_{ANB}^{Imap}$ has an edge between $X$ and $Y$.
	\end{lemma}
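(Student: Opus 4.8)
The plan is to argue by contradiction. Suppose some $G_{ANB}^{Imap} \in {\cal G}_{ANB}^{Imap}$ contains no edge between $X$ and $Y$, where $G^*$ has the convergence connection $X \to X_0 \gets Y$. Since both edges of this connection point into $X_0$, the variables $X$ and $Y$ are feature variables, i.e. $X, Y \in {\bf V} \setminus \{X_0\}$, and, by the notion of convergence connection used here, they are distinct and non-adjacent in $G^*$.

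The first step is to exhibit a d-separator of $X$ and $Y$ in $G_{ANB}^{Imap}$ that contains $X_0$. Because $G_{ANB}^{Imap}$ is acyclic, at least one of $X$, $Y$ is not a descendant of the other; say $Y \notin De_{G_{ANB}^{Imap}}(X)$ (the other case is symmetric). Then $Y$ lies among the non-descendants of $X$ and, by the assumed non-adjacency, $Y \notin {\bf Pa}_X^{G_{ANB}^{Imap}}$, so the (graphical) local Markov property gives $Dsep_{G_{ANB}^{Imap}}(X, Y \mid {\bf Pa}_X^{G_{ANB}^{Imap}})$. The key structural observation is that in any ANB structure $X_0$ is a parent of every feature variable, hence $X_0 \in {\bf Pa}_X^{G_{ANB}^{Imap}}$. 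Writing ${\bf Z} := {\bf Pa}_X^{G_{ANB}^{Imap}}$ and ${\bf Z}' := {\bf Z} \setminus \{X_0\} \subseteq {\bf V} \setminus \{X, Y, X_0\}$, we obtain $Dsep_{G_{ANB}^{Imap}}(X, Y \mid {\bf Z}', X_0)$.

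The second step combines this with the two hypotheses. Since $G_{ANB}^{Imap}$ is an I-map, the definition of I-map turns the d-separation above into $I_{P^*}(X, Y \mid {\bf Z}', X_0)$. On the other hand, applying Theorem 1 to the convergence connection $X \to X_0 \gets Y$ in $G^*$ yields $\lnot Dsep_{G^*}(X, Y \mid {\bf W}, X_0)$ for every ${\bf W} \subseteq {\bf V} \setminus \{X, Y, X_0\}$; under Assumption 1, $G^*$ is a perfect map, so d-separation in $G^*$ is equivalent to conditional independence in $P^*$, and we get $\lnot I_{P^*}(X, Y \mid {\bf W}, X_0)$ for all such ${\bf W}$, in particular for ${\bf W} = {\bf Z}'$. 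This contradicts $I_{P^*}(X, Y \mid {\bf Z}', X_0)$, so $G_{ANB}^{Imap}$ must contain an edge between $X$ and $Y$.

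I expect the middle step — producing a d-separator of $X$ and $Y$ in the ANB I-map that is guaranteed to include $X_0$ — to be the only delicate point. It rests on two facts: non-adjacent vertices in a DAG are always d-separated by the parent set of whichever of them is a non-descendant of the other, and the defining constraint of an ANB forces $X_0$ into every feature variable's parent set, so $X_0$ is automatically in that d-separator. The remainder is a direct application of Theorem 1, Assumption 1, and the definition of an I-map.
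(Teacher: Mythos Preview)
Your proof is correct and follows the same contradiction scheme as the paper: assume no $X$--$Y$ edge in $G_{ANB}^{Imap}$, produce a d-separator of $X$ and $Y$ that contains $X_0$, then derive a contradiction from Theorem~1 applied to the convergence connection in $G^*$ together with Assumption~1 and the I-map property. The only difference is in how that d-separator is obtained: the paper invokes Theorem~1 directly on the divergence connection $X \gets X_0 \to Y$ in $G_{ANB}^{Imap}$ to assert the existence of some ${\bf Z}$ with $Dsep_{G_{ANB}^{Imap}}(X, Y \mid X_0, {\bf Z})$, whereas you construct one explicitly via the local Markov property, taking ${\bf Z}' = {\bf Pa}_X^{G_{ANB}^{Imap}} \setminus \{X_0\}$. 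Your route is slightly more self-contained and makes the use of the I-map definition and Assumption~1 explicit, which the paper's proof leaves implicit.
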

	\begin{proof}
		We prove Lemma 3 by contradiction.
		Assuming that $G_{ANB}^{Imap}$ has no edge between $X$ and $Y$.
		Because $G_{ANB}^{Imap}$ has a divergence connection $X \gets X_0 \to Y$, we obtain
		\begin{align}
		\label{prf:lem3_1}
		\exists {\bf Z} \subseteq {\bf V} \setminus \{X, Y, X_0\}, Dsep_{G_{ANB}^{Imap}}(X, Y \mid X_0, {\bf Z}).
		\end{align}
		Because $G^*$ has a convergence connection $X \to X_0 \gets Y$, the following proposition holds from Theorem 1: 
		\begin{align}
		\label{prf:lem3_2}
		\forall {\bf Z} \subseteq {\bf V} \setminus \{X, Y, X_0\}, \lnot Dsep_{G_{ANB}^{Imap}}(X, Y \mid X_0, {\bf Z}).
		\end{align}
		This result contradicts (\ref{prf:lem3_1}).
		Consequently, $G_{ANB}^{Imap}$ has an edge between $X$ and $Y$.
	\end{proof}
	Furthermore, under Assumption 1 and the following assumptions, we derive Lemma 4.
	\setcounter{theorem}{1}
	\begin{assumption}
		All feature variables are included in the Markov blanket $M$ of the class variable in the true structure $G^*$.
	\end{assumption}
	\setcounter{theorem}{2}
	\begin{assumption}
		For $\forall X \in M$, $X$ and $X_0$ are adjacent to $G^*$.
	\end{assumption}
	\setcounter{theorem}{3}
	\begin{lemma}
		Let $G_1^*$ be the modified $G^*$ by operation 1 in Lemma 1.
		In addition, let $G^*_{12}$ be the structure that is modified to $G_1^*$ by operation 2 in Lemma 1.
		Under Assumptions 1 through 3, $G_1^*$ is Markov equivalent to $G_{12}^*$.
	\end{lemma}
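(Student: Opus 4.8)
The plan is to apply Verma's theorem (Theorem~2): $G_1^*$ and $G_{12}^*$ are Markov equivalent if and only if they have the same skeleton and the same convergence connections. The skeleton claim is immediate, since operation~2 only reverses the edges between $X_0$ and ${\bf Pa}_0^{G^*}$ and adds or deletes no link, so $G_1^*$ and $G_{12}^*$ share a skeleton (namely that of $G_1^*$). Hence the entire content of the lemma is that $G_1^*$ and $G_{12}^*$ possess exactly the same convergence connections, and I would spend the proof establishing that.

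The key preliminary fact I would extract from the assumptions is that $X_0$ is adjacent to every other variable. By Assumption~2 every feature variable lies in the Markov blanket $M$ of $X_0$ in $G^*$, and by Assumption~3 every member of $M$ is adjacent to $X_0$ in $G^*$; since ${\bf V}$ consists only of $X_0$ together with the feature variables, $X_0$ is adjacent in $G^*$ to all of them, and therefore also in $G_1^*$ and in $G_{12}^*$: operation~1 adds no edge incident to $X_0$, so $X_0$ has the same neighbours in $G^*$ and $G_1^*$, and $G_{12}^*$ has the same skeleton as $G_1^*$. I would also record two bookkeeping facts: ${\bf Pa}_0^{G^*} = {\bf Pa}_0^{G_1^*}$ and this set forms a clique in $G_1^*$ after operation~1; and $X_0$ has no parents in $G_{12}^*$ after operation~2, the only edges whose orientation differs between $G_1^*$ and $G_{12}^*$ being exactly those between $X_0$ and ${\bf Pa}_0^{G^*}$.

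The core step is then a short case analysis. Suppose, for contradiction, that some convergence connection $X \to Z \gets Y$ with $X, Y$ non-adjacent occurs in one of the two graphs but not the other. Since edges not incident to $X_0$ have the same orientation in both graphs and the two graphs share a skeleton, such a discrepancy forces $X_0 \in \{X, Y, Z\}$. If $Z = X_0$, then $X, Y \in {\bf Pa}_0^{G^*}$; but in $G_1^*$ operation~1 makes $X$ and $Y$ adjacent, and in $G_{12}^*$ the node $X_0$ has no parents, so no such convergence connection exists in either graph. If $X_0 \in \{X, Y\}$, then $X_0$ is one of the two non-adjacent endpoints of the connection, contradicting the adjacency fact above. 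Hence neither graph has a convergence connection through $X_0$; every convergence connection of either graph involves only non-$X_0$ nodes and is therefore common to both; and by Theorem~2 the two structures are Markov equivalent.

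The only delicate aspect is the bookkeeping around which edges change and the proper invocation of Lemma~1: one must be careful that operations~1 and~2 as applied here keep $G_{12}^*$ acyclic so that ``convergence connection'' is meaningful (which is exactly what Lemma~1 guarantees) and that ``parent of $X_0$'' is interpreted consistently across $G^*$, $G_1^*$ and $G_{12}^*$. I would also flag that all three assumptions are genuinely used --- Assumption~1 to make the perfect map $G^*$ well-defined, and Assumptions~2 and~3 jointly to force the universal adjacency of $X_0$ that drives the case analysis. Beyond this, the argument is purely structural, with no computation involved.
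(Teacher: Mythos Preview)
Your proposal is correct and follows essentially the same route as the paper: both invoke Theorem~2 (Verma's criterion), observe that operation~2 preserves the skeleton, and then argue that the set of convergence connections is unchanged. The only cosmetic difference is how the case split is organized: the paper partitions by whether the collider lies in ${\bf Pa}_0^{G^*}\cup\{X_0\}$ and appeals to that set being a clique in both graphs, whereas you partition by whether $X_0$ appears as the collider or as an endpoint and appeal to the universal adjacency of $X_0$ (from Assumptions~2 and~3); your version makes the use of the assumptions more explicit, but the underlying argument is the same.
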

	\begin{proof}
		From Theorem 2, we prove Lemma 4 by showing the following two propositions: (I) $G_1^*$ and $G_{12}^*$ have the same links (edges without direction) and (II) they have the same set of convergence connections.
		Proposition (I) can be proved immediately because the difference between $G^*_1$ and $G_{12}^*$ is only the direction of the edges between $X_0$ and the variables in ${\bf Pa}_{0}^{G^*}$.
		For the same reason, $G_1^*$ and $G_{12}^*$ have the same set of convergence connections as colliders in ${\bf V} \setminus ({\bf Pa}^{G^*}_{0} \cup \{X_0\})$.
		Moreover, there are no convergence connections with colliders in ${\bf Pa}^{G^*}_{0} \cup \{X_0\}$ in both $G_1^*$ and $G_{12}^*$ because all the variables in ${\bf Pa}^{G^*}_{0} \cup \{X_0\}$ are adjacent in the two structures.
		Consequently, they have the same set of convergence connections so Proposition (II) holds.
		This completes the proof.		
	\end{proof}
	Finally, under Assumptions 1 through 3, we derive the following lemma.
	\setcounter{theorem}{4}
	\begin{lemma}
		Under Assumptions 1 through 3, $G^*_{12}$ is an I-map.
	\end{lemma}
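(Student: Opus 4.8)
The plan is to obtain Lemma 5 from Lemma 4 together with the elementary fact that enlarging the edge set of a DAG cannot destroy the I-map property, using the observation that Definition 4 makes ``being an I-map of $P^*$'' a property of the d-separation relation alone. So I would first argue that $G_1^*$ is an I-map of $P^*$, and then transfer this to $G_{12}^*$ through the Markov equivalence supplied by Lemma 4.

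First I would check that $G_1^*$ is a well-defined DAG: operation 1 of Lemma 1 only inserts edges among ${\bf Pa}_0^{G^*}$, so orienting each inserted edge in agreement with a fixed topological ordering of $G^*$ keeps every edge, old and new, consistent with that ordering and hence creates no directed cycle. Next, Assumption 1 guarantees that $G^*$ is a perfect map of $P^*$, so in particular $Dsep_{G^*}(X,Y\mid{\bf Z})\Rightarrow I_{P^*}(X,Y\mid{\bf Z})$ for all $X,Y,{\bf Z}$. Because $G_1^*$ is obtained from $G^*$ purely by adding edges, every d-separation holding in the denser graph $G_1^*$ already holds in the sparser graph $G^*$ (adding edges can only create new d-connecting routes, never new d-separations), so $Dsep_{G_1^*}(X,Y\mid{\bf Z})\Rightarrow Dsep_{G^*}(X,Y\mid{\bf Z})\Rightarrow I_{P^*}(X,Y\mid{\bf Z})$, i.e., $G_1^*$ is an I-map. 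Finally, Lemma 4 (which holds under Assumptions 1 through 3) gives that $G_1^*$ and $G_{12}^*$ are Markov equivalent, so by Definition 3 they entail exactly the same d-separations; combining $Dsep_{G_{12}^*}(X,Y\mid{\bf Z})\Leftrightarrow Dsep_{G_1^*}(X,Y\mid{\bf Z})$ with the chain above yields $Dsep_{G_{12}^*}(X,Y\mid{\bf Z})\Rightarrow I_{P^*}(X,Y\mid{\bf Z})$, so $G_{12}^*$ is an I-map.

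The step I expect to require the most care is the monotonicity claim used for $G_1^*$. Under the paper's Definition 1, a ``collider'' on a path must be unshielded, so adding the moral edges among ${\bf Pa}_0^{G^*}$ can convert an unshielded collider on some path into a shielded one, and one cannot simply reuse that path to witness d-connection in $G_1^*$. The clean fix is that the newly added shielding edge itself provides an alternative d-connecting path between the two neighbors of that collider, so d-connection in $G^*$ still propagates to $G_1^*$; I would make this reroute explicit (equivalently, appeal to the standard equivalence between the minimal-path formulation of d-separation and the usual one). Everything else is bookkeeping with Definitions 3 and 4 and the already-established lemmas.
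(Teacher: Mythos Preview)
Your proposal is correct and follows essentially the same route as the paper: show $G_1^*$ is an I-map because it is $G^*$ with extra edges (so no new d-separations are created), then transfer the I-map property to $G_{12}^*$ via the Markov equivalence of Lemma~4. The paper's proof is the two-line version of exactly this argument; your additional care about acyclicity of $G_1^*$ and about the monotonicity of d-separation under edge addition (given the paper's unshielded-collider Definition~1) goes beyond what the paper itself spells out, but does not change the underlying approach.
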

	\begin{proof}
		The DAG $G^*_1$ results from adding the edges between the variables in ${\bf Pa}_{0}^{G^*}$ to $G^*$.
		Because adding edges does not create a new d-separation, $G_1^*$ remains an I-map.
		Lemma 5 holds because $G_{1}^*$ is a Markov equivalent to $G^*_{12}$ from Lemma 4.
	\end{proof}
	Under Assumptions 1 through 3, we prove the following main theorem using Lemmas 1 through 5.
	\setcounter{theorem}{2}
	\begin{theorem}
		Under Assumptions 1 through 3, when the sample becomes sufficiently large, the proposal (learning ANB using BDeu) achieves the classification-equivalent structure to $G^*$.
	\end{theorem}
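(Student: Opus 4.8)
The plan is to use the structure $G^*_{12}$ of Lemmas~4 and~5 as a bridge between the true structure $G^*$ and whatever ANB the proposal returns. First I would check that $G^*_{12}$ is an admissible competitor in the ANB search space. By Assumptions~2 and~3 every feature variable lies in the Markov blanket of $X_0$ and is adjacent to $X_0$ in $G^*$; operation~1 of Lemma~1 does not change which variables are adjacent to $X_0$, and operation~2 only reverses the edges between $X_0$ and its parents, so in $G^*_{12}$ the node $X_0$ has no parent and is a parent of every feature variable, i.e.\ $G^*_{12}$ is an ANB. Lemma~1 gives that $G^*_{12}$ is classification-equivalent to $G^*$, and Lemma~5 gives that $G^*_{12}$ is an I-map. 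Hence the ANB search space contains an I-map, so by the asymptotic consistency of BDeu (Definition~5) the ANB structure $\hat G$ returned by the proposal is, for a sufficiently large sample, an I-map with no more parameters than any other ANB I-map; in particular $Score(\hat G)\ge Score(G^*_{12})$ and $\hat G$ has at most as many parameters as $G^*_{12}$.

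Next I would pin down the skeleton of $\hat G$. Since $G^*$ is a perfect map (Assumption~1), two adjacent variables of $G^*$ are conditionally dependent given every set, so any I-map of $P^*$, in particular $\hat G$, must contain every edge of the skeleton of $G^*$. By Lemma~3, $\hat G$ must moreover contain an edge between each pair $X,Y$ forming a convergence connection $X\to X_0\gets Y$ in $G^*$; together with the edges already present among ${\bf Pa}_0^{G^*}$ this makes ${\bf Pa}_0^{G^*}$ a clique in $\hat G$, exactly as in $G^*_1$ and hence (Lemma~4) as in $G^*_{12}$. Therefore the skeleton of $\hat G$ contains that of $G^*_{12}$; and since adding any edge to a DAG strictly increases its parameter count, the minimality of $\hat G$ forces the two skeletons to be equal.

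It then remains to show that $\hat G$ and $G^*_{12}$ have the same convergence connections, for then Theorem~2 gives that they are Markov equivalent, the score-equivalence of BDeu gives that they yield identical class-posterior predictives on every dataset (classification-equivalence), and transitivity with the first step completes the proof. I expect this v-structure matching to be the main obstacle. The idea is that $\hat G$ is an ANB, so $X_0$ is never a collider and every collider is a feature variable $Z$ with two feature-parents $X,Y$ non-adjacent in the common skeleton; moreover, because $X\gets X_0\to Y$ is always a (possibly blocked) divergence path, the independences that distinguish ``collider at $Z$'' from ``non-collider at $Z$'' must all be conditioned on $X_0$, so that the orientation of each such triple is dictated by whether the corresponding statement $I_{P^*}(X,Y\mid X_0,\ldots)$ (with or without $Z$) holds in $P^*$ --- a condition symmetric between $\hat G$ and $G^*_{12}$, given that both are I-maps of the perfect map $P^*$ with the same skeleton and neither can be a strictly sparser (in d-separations) ANB I-map than the minimal one by Lemma~2. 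The only delicate case is a triple whose collider or outer nodes lie in ${\bf Pa}_0^{G^*}$, where the clique edges forced by Lemma~3 interact with the reversed $X_0$-edges; this is exactly the configuration Lemma~4 was set up to control, so I would invoke Lemma~4 there rather than redo the analysis.
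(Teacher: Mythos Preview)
Your overall architecture---use $G^*_{12}$ as an ANB I-map that is classification-equivalent to $G^*$, then argue that the learned $\hat G$ is Markov equivalent to $G^*_{12}$---matches the paper's. But the way you try to establish Markov equivalence is different and contains a genuine gap.

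The problematic step is the skeleton argument. You argue that the skeleton of $\hat G$ contains that of $G^*_{12}$ (this part is fine and uses Lemma~3 correctly), and then write ``since adding any edge to a DAG strictly increases its parameter count, the minimality of $\hat G$ forces the two skeletons to be equal.'' The quoted fact is true only when you add an edge to a \emph{fixed} DAG; it does not let you compare two DAGs with different orientations. In general a DAG with strictly more edges can have \emph{fewer} parameters than one with fewer edges if the latter concentrates in-degree at a single vertex. For instance, with four binary feature variables, the star $X_1\to X_4,\ X_2\to X_4,\ X_3\to X_4$ has $\sum_i 2^{|{\bf Pa}_i|}=1+1+1+8=11$, whereas adding the edge $X_1\mbox{--}X_2$ and reorienting acyclically with in-degree sequence $(0,1,1,2)$ gives $1+2+2+4=9$. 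So from ${\rm skel}(\hat G)\supseteq {\rm skel}(G^*_{12})$ and ${\rm params}(\hat G)\le {\rm params}(G^*_{12})$ you cannot conclude that the skeletons coincide. The subsequent v-structure discussion then rests on an unproven premise, and is in any case only a sketch (``I expect this to be the main obstacle'').

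The paper avoids the skeleton/v-structure route entirely. Instead it proves directly that $G^*_{12}$ is the BDeu-maximiser among ANB structures by showing that every ANB I-map $G$ satisfies $Dsep_G(X,Y\mid{\bf Z})\Rightarrow Dsep_{G^*_{12}}(X,Y\mid{\bf Z})$ for all $X,Y,{\bf Z}$, and then applying Lemma~2. The implication is established by a case analysis on whether $X,Y\in{\bf Pa}_0^{G^*}$ (handled via Lemma~3) or not (handled by reducing to d-separations in $G^*$ and then to $G^*_{12}$ via Lemma~4, with sub-cases according to whether $X_0\in{\bf Z}$ and whether one of $X,Y$ is $X_0$). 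If you want to repair your argument, the natural fix is to prove this d-separation containment rather than to count parameters.
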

	\begin{proof}
		Because $G_{12}^*$ is classification-equivalent to $G^*$ from Lemma 1, we prove Theorem 3 by showing that the proposed method asymptotically learns a Markov-equivalent structure to $G_{12}^*$.
		We prove Theorem 3 by showing that $G_{12}^*$ asymptotically has the maximum BDeu score among all the ANB structures: 
		\begin{align}
		\label{prf:th2_1}
			\forall G_{ANB} \in {\cal G}_{ANB},\  Score(G_{ANB}) \leq Score(G_{12}^*).
		\end{align}
		From Definition 5, the BDeu scores of the I-maps are higher than those of any non-I-maps when the sample size becomes sufficiently large.
		Therefore, it is sufficient to show that the following proposition holds asymptotically to prove that Proposition (\ref{prf:th2_1}) asymptotically holds.
		\begin{align}
		\label{prf:th2_2}
		\forall G_{ANB}^{Imap} \in {\cal G}_{ANB}^{Imap},\  Score(G_{ANB}^{Imap}) \leq Score(G_{12}^*).
		\end{align}
		From Lemma 5, $G_{12}^*$ is an I-map.
		Therefore, from Lemma 2, a sufficient condition for satisfying (\ref{prf:th2_2}) is as follows: 
		\begin{align}
		\label{prf:th2_3}
		\forall G_{ANB}^{Imap} \in {\cal G}_{ANB}^{Imap}, \forall X, Y \in M \cup \{X_0\}, \notag \\
		\forall {\bf Z} \subseteq M \cup \{X_0\} \setminus \{X, Y\}, \  &Dsep_{G_{ANB}^{Imap}}(X, Y \mid {\bf Z}) \Rightarrow Dsep_{G_{12}^*}(X, Y \mid {\bf Z}).
		\end{align}
		We prove (\ref{prf:th2_3}) by dividing it into two cases: $X \in {\bf Pa}^{G^*}_{0} \land Y \in {\bf Pa}^{G^*}_{0}$ and $X \notin {\bf Pa}^{G^*}_{0} \lor Y \notin {\bf Pa}^{G^*}_{0}$.
		\begin{description}
		\item[Case I: $X \in {\bf Pa}^{G^*}_{0} \land Y \in {\bf Pa}^{G^*}_{0}$]\ \\
		From Lemma 3, all variables in ${\bf Pa}^{G^*}_{0}$ are adjacent to $G_{ANB}^{Imap}$.
		Therefore, we obtain
		\begin{align}
		\label{prf:th2_4}
		\forall {\bf Z} \subseteq M \cup \{X_0\} \setminus \{X, Y\}, \lnot Dsep_{G_{ANB}^{Imap}}(X, Y \mid {\bf Z}) \land \lnot Dsep_{G_{12}^*}(X, Y \mid {\bf Z}).
		\end{align}
		For two Boolean propositions $p$ and $q$, the following holds.
		\begin{align}
		\label{prf:th2_5}
		(\lnot p \land \lnot q) \Rightarrow (p \Leftrightarrow q)
		\end{align}
		From (\ref{prf:th2_4}) and (\ref{prf:th2_5}), we obtain
		\begin{align}
		\forall {\bf Z} \subseteq M \cup \{X_0\} \setminus \{X, Y\}, \   Dsep_{G_{ANB}^{Imap}}(X, Y \mid {\bf Z}) \Leftrightarrow Dsep_{G^{*}_{12}}(X, Y \mid {\bf Z}). \notag
		\end{align}
		Clearly, the following holds.
		\begin{align}
		\forall {\bf Z} \subseteq M \cup \{X_0\} \setminus \{X, Y\}, \   Dsep_{G_{ANB}^{Imap}}(X, Y \mid {\bf Z}) \Rightarrow Dsep_{G^{*}_{12}}(X, Y \mid {\bf Z}). \notag
		\end{align}
		This completes the proof of (\ref{prf:th2_3}) in {\bf Case I}.
		\item[Case II: $X \notin {\bf Pa}^{G^*}_{0} \lor Y \notin {\bf Pa}^{G^*}_{0}$]\ \\
		From Definition 4 and Assumption 1, we obtain
		\begin{align}
		\forall {\bf Z} \subseteq M \cup \{X_0\} \setminus \{X, Y\}, Dsep_{G_{ANB}^{Imap}}(X, Y \mid {\bf Z}) \Rightarrow Dsep_{G^*}(X, Y \mid {\bf Z}).\notag
		\end{align}
		Thus, we can prove (\ref{prf:th2_3}) by showing that the following proposition holds: 
		\begin{align}
		\label{prf:th2_6}
			\forall {\bf Z} \subseteq M \cup \{X_0\} \setminus \{X, Y\}, Dsep_{G^*}(X, Y \mid {\bf Z}) \Leftrightarrow Dsep_{G^{*}_{12}}(X, Y \mid {\bf Z}).
		\end{align}
		For the remainder of the proof, we prove the sufficient condition (\ref{prf:th2_6}) to satisfy (\ref{prf:th2_3}) by dividing it into two cases: $X_0 \in {\bf Z}$ and $X_0 \notin {\bf Z}$.
		\begin{description}
			\item[Case i: $X_0 \in {\bf Z}$]\ \\			
			All pairs of non-adjacent variables in ${\bf Pa}^{G^*}_{0}$ in $G^*$ comprise a convergence connection with collider $X_0$.
			From Theorem 1, these pairs are necessarily d-connected, given $X_0$ in $G^*$.
			Therefore, all the variables in ${\bf Pa}^{G^*}_{0}$ are d-connected, given $X_0$ in $G^{*}$.
			This means that $G^*$ and $G^*_1$ represent identical d-separations given $X_0$.
			Because $G^*_1$ is Markov equivalent to $G^*_{12}$ from Lemma 4, $G^*$ and $G_{12}^*$ represent identical d-separations given $X_0$; i.e., Proposition (\ref{prf:th2_6}) holds.
			\item[Case ii: $X_0 \notin {\bf Z}$]\ \\
			We divide (\ref{prf:th2_6}) into two cases: $X = X_0 \lor Y = X_0$ and $X \neq X_0 \land Y \neq X_0$.
			\begin{description}
				\item[Case 1: $X = X_0 \lor Y = X_0$]\ \\
					Because all the variables in the $X_0$'s Markov blanket $M$ are adjacent to $X_0$ in both $G_{12}^*$ and $G^*$ from Assumption 2, we obtain $\lnot Dsep_{G_{12}^*}(X, Y \mid {\bf Z}) \land \lnot Dsep_{G^*}(X, Y\mid {\bf Z})$.
					From (\ref{prf:th2_5}), proposition (\ref{prf:th2_6}) holds.
				\item[Case 2: $X \neq X_0 \land Y \neq X_0$]\ \\
					If both $G_{12}^*$ and $G^*$ have no edge between $X$ and $Y$, they have a serial or divergence connection: $X \to X_0 \to Y$ or $X \gets X_0 \to Y$.
					Because the serial and divergence connections represent d-connections between $X$ and $Y$ in this case from Theorem 1, we obtain $\lnot Dsep_{G_{12}^*}(X, Y \mid {\bf Z}) \land \lnot Dsep_{G^{*}}(X, Y \mid {\bf Z})$.
					From (\ref{prf:th2_5}), proposition (\ref{prf:th2_6}) holds.
			\end{description}
		\end{description}
		Thus, we complete the proof of (\ref{prf:th2_3}) in {\bf Case II}.
		\end{description}
		Consequently, proposition (\ref{prf:th2_3}) is true, which completes the proof of Theorem 3.
	\end{proof}
	We proved that the proposed ANB asymptotically estimates the identical conditional probability of the class variable to that of the exactly learned GBN.

	\subsection{Numerical Examples}
	This subsection presents the numerical experiments conducted to demonstrate the asymptotic properties of the proposed method.
	To demonstrate that the proposed method asymptotically achieves the I-map with the fewest parameters among all the possible ANB structures, we evaluate the structural Hamming distance (SHD) \citep{Tsamardinos2006}, which measures the distance between the structure learned by the proposed method and the I-map with the fewest parameters among all the possible ANB structures.
	To demonstrate Theorem 3, we evaluate the Kullback-Leibler divergence (KLD) between the learned class variable posterior using the proposed method and that by the true structure.
	This experiment uses two benchmark datasets from {\it bnlearn} \citep{Scutari2010}: CANCER and ASIA, as depicted in Figures 1 and 2.
	We use the variables "Cancer" and "either" as the class variables in CANCER and ASIA, respectively.
	In that case, CANCER satisfies Assumptions 2 and 3, but ASIA does not.
	
	\begin{figure}[tp]
		\begin{minipage}{0.5\hsize}
			\begin{center}
				\includegraphics[width=6cm]{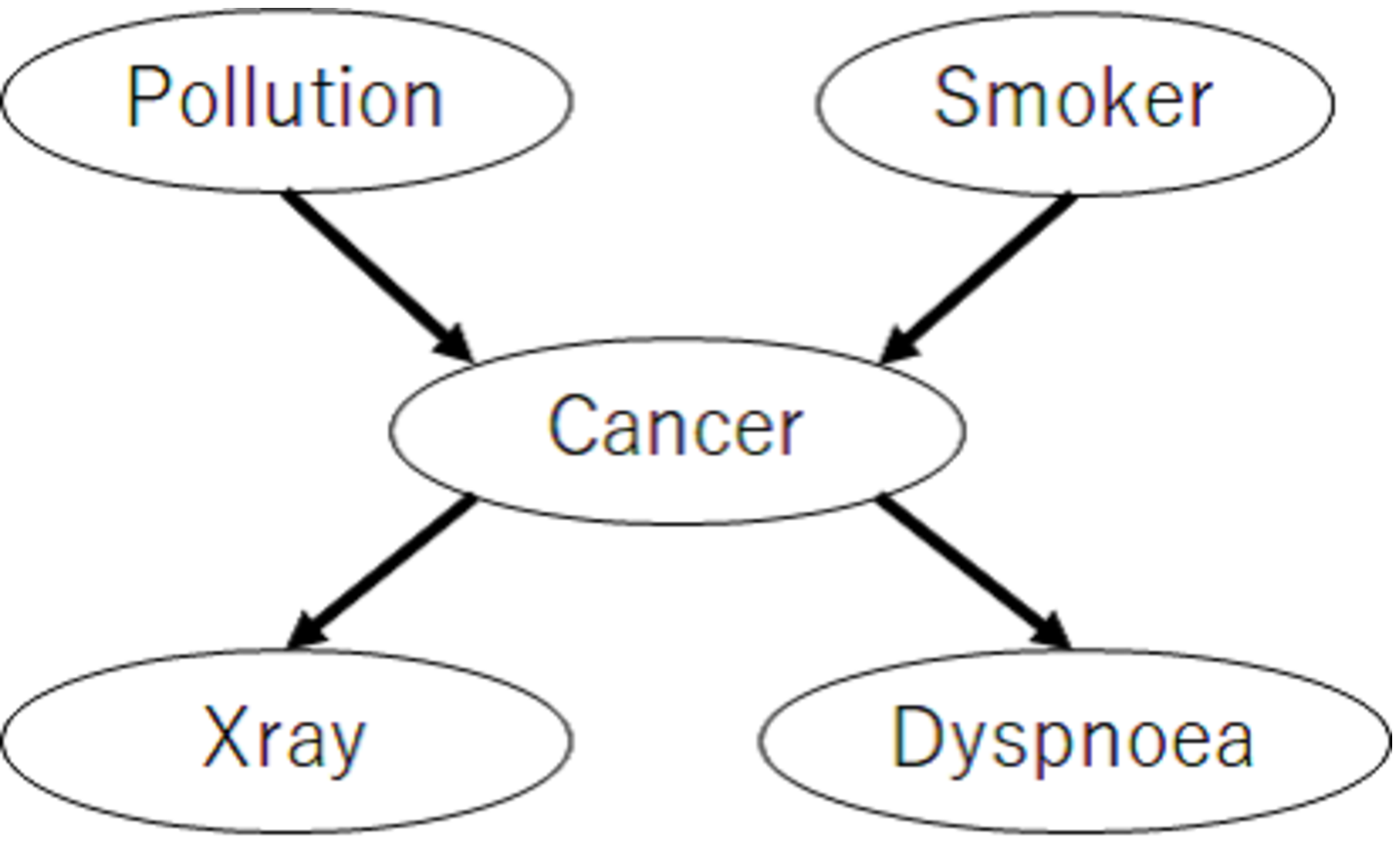}
			\end{center}
			\caption{The CANCER network.}
			\label{fig:cancer}
		\end{minipage}
		\begin{minipage}{0.5\hsize}
			\begin{center}
				\includegraphics[width=6cm]{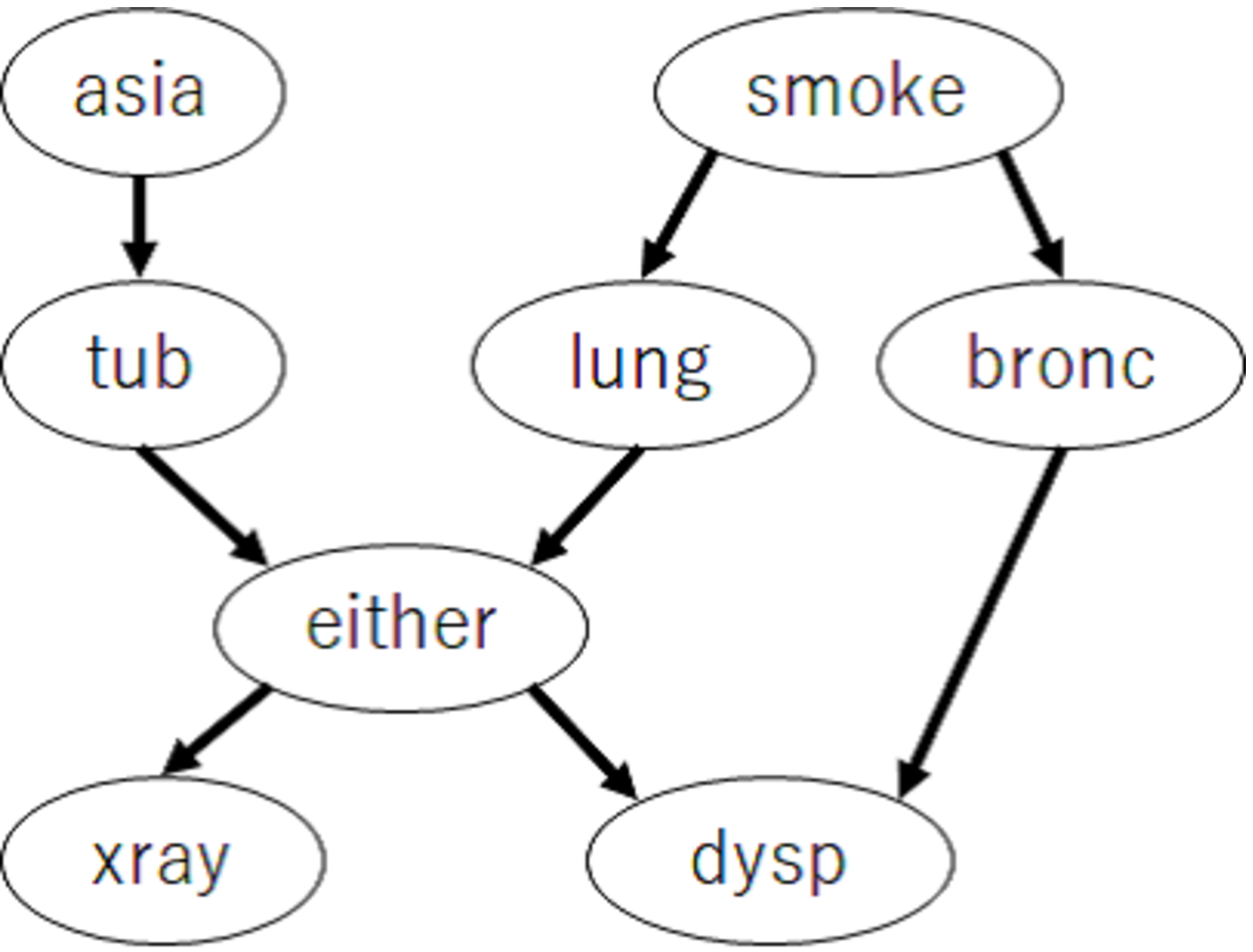}
			\end{center}
			\caption{The ASIA network. }
			\label{fig:asia}
		\end{minipage}
	\end{figure}
	
	\begin{table}[tp]
		\label{table:data_type}
		\centering
		\scalebox{1.0}[1.0]{
			\renewcommand{\arraystretch}{0.9}
			{\tabcolsep = 0.6mm
				\begin{tabular}{ccccc}
					\hline
					Network&Variables&\shortstack{Sample\\size}&\shortstack{SHD-(Proposal, \\I-map ANB)}&\shortstack{KLD-(Proposal, \\True structure)}\rule[0mm]{0mm}{7.5mm}\\
					\hline
					&&100&3&$2.31\times 10^{-2}$\\
					&&500&2&$1.24\times 10^{-1}$\\
					&&1000&2&$7.63\times 10^{-2}$\\
					ASIA&8&5000&1&$3.67\times 10^{-3}$\\
					&&10000&0&$9.26\times 10^{-4}$\\
					&&50000&0&$6.28\times 10^{-4}$\\
					&&100000&0&$3.59\times 10^{-5}$\\
					\hline 
					&&100&1&$8.79\times 10^{-2}$\\
					&&500&1&$2.43\times 10^{-3}$\\
					&&1000&0&0.00\\
					CANCER&5&5000&0&0.00\\
					&&10000&0&0.00\\
					&&50000&0&0.00\\
					&&100000&0&0.00\\
					\hline
				\end{tabular}
			}
		}
	\caption{The SHD between the structure learned by the proposed method and the I-map with the fewest parameters among all the ANB structures, the KLD between the learned class variable posterior by the proposed method and learned one using the true structure.}
	\end{table}
	
	From the two networks, we randomly generate sample data for each sample size $N = 100, 500, 1,000, 5,000, 10,000, 50,000$, and $100,000$.
	Based on the generated data, we learn BNC structures using the proposed method and then evaluate the SHDs and KLDs.
	Table 3 presents results.
	The results show that the SHD converges to $0$ when the sample size increases in both CANCER and ASIA.
	Thus, the proposed method asymptotically learns the I-map with the fewest parameters among all possible ANB structures.
	Furthermore, in CANCER, the KLD between the learned class variable posterior by the proposed method and that by the true structure becomes $0$ when $N \ge 1,000$.
	The results demonstrate that the proposed method learns the classification-equivalent structure of the true one when the sample size becomes sufficiently large, as described in Theorem 3.
	In ASIA however, the KLD between the learned class variable posterior by the proposed method and that by the true structure does not reach $0$ even when the sample size becomes large because ASIA does not satisfy Assumptions 2 and 3.
	
	\section{Learning Markov Blanket}
		Theorem 3 assumes all feature variables are included in the Markov blanket of the class variable.
		However, this assumption does not necessarily hold.
		To solve this problem, we must learn the Markov blanket of the class variable before learning the ANB.
		Under Assumption 3, the Markov blanket of the class variable is equivalent to the parent-child (PC) set of the class variable.
		It is known that the exact learning of a PC set of variables is computationally infeasible when the number of variables increases.
		To reduce the computational cost of learning a PC set, \cite{Niinimaki2012} proposed a score-based local learning algorithm (SLL), which has two learning steps. In step 1, the algorithm sequentially learns the PC set by repeatedly using the exact learning structure algorithm on a set of variables containing the class variable, the current PC set, and one new query variable.
		In step 2, SLL enforces the symmetry constraint: if $X_i$ is a child of $X_j$, then $X_j$ is a parent of $X_i$.
		This allows us to try removing extra variables from PC set, proving that the SLL algorithm always finds the correct PC of the class variable when the sample size is sufficiently large.
		Moreover, \cite{Gao2017} proposed the $S^2$TMB algorithm, which improved the efficiency over the SLL by removing the symmetric constraints in PC search steps.
		However, $S^2$TMB is computationally infeasible when the size of the PC set surpasses 30.
	
		As an alternative approach for learning large PC sets, previous studies proposed constraint-based PC search algorithms, such as MMPC \citep{Tsamardinos2006}, HITON-PC \citep{Aliferis2003}, and PCMB \citep{Pena2007}.
		These methods produce an undirected graph structure using statistical hypothesis tests or information theory tests.
		As statistical hypothesis tests, the $G^2$ and $\chi^2$ tests were used for these constraint-based methods.
		In these tests, the independence of the two variables was set as a null hypothesis.
		A p-value signifies the probability that the null hypothesis is correct at a user-determined significance level.
		If the p-value exceeds the significance level, the null hypothesis is accepted, and the edge is removed.
		However, \cite{gailetal-2012} reported that statistical hypothesis tests have a significant shortcoming: the p-value sometimes becomes much smaller than the significance level as the sample size increases.
		Therefore, statistical hypothesis tests suffer from Type I errors (detecting dependence for an independent conditional relation in the true DAG).
		Conditional mutual information (CMI) is often used as a CI test \citep{coverinfo-1991}.
		The CMI strongly depends on a hand-tuned threshold value. Therefore, it is not guaranteed to estimate the true CI structure.
		Consequently, CI tests have no asymptotic consistency.

		For a CI test with asymptotic consistency, \cite{steck-bf} proposed a Bayes factor with BDeu (the “BF method,” below), where the Bayes factor is the ratio of marginal likelihoods between two hypotheses \citep{Kass1995}.
		For two variables $X, Y \in {\bf V}$ and a set of conditional variables $\bf Z \subseteq {\bf V} \setminus \{X, Y\}$, the BF method $BF(X, Y \mid {\bf Z})$ is defined as
		$$BF(X, Y \mid {\bf Z}) = \dfrac{\exp(Score(CFT(X, {\bf Z})))}{\exp(Score(CFT(X, {\bf Z} \cup \{Y\})))},$$
		where $Score(CFT(X, {\bf Z}))$ and $Score(CFT(X, {\bf Z} \cup \{Y\}))$ can be obtained using Equation (\ref{score}).
		The BF method detects $I_{P^*}(X, Y \mid {\bf Z})$ if $BF(X, Y \mid {\bf Z})$ is larger than the threshold $\delta$, and detects the $\lnot I_{P^*}(X, Y \mid {\bf Z})$ otherwise.
		\citet{Natori2015} and \citet{Natori2017} applied the BF method to a constraint-based approach, and showed that their method is more accurate than the other methods with traditional CI tests.

		We propose the constraint-based PC search algorithm using a BF method.
		The proposed PC search algorithm finds the PC set of the class variable using a BF method between the class variable and all feature variables because the Bayes factor has an asymptotic consistency for the CI tests \citep{Natori2017}.
		It is known that missing crucial variables degrades the accuracy \citep{Friedman1997}.
		Therefore, we redundantly learn the PC set of the class variable to reduce extra variables with no missing variables as follows.
		\begin{itemize}
		\item The proposed PC search algorithm only conducts the CI tests at the zero order (given no conditional variables) which is more reliable than those at the higher order.
		\item We use a positive value as Bayes factor's threshold $\delta$.
		\end{itemize}

	\begin{table}[tp]
		\label{table:data_type}
		\centering
		\scalebox{0.65}[0.65]{
			\renewcommand{\arraystretch}{1.0}
			{\tabcolsep = 0.6mm
				\begin{tabular}{cc|ccc|ccc|ccc|ccc|ccc}
					\hline				
					&&\multicolumn{3}{c}{MMPC}&\multicolumn{3}{|c}{HITON-PC}&\multicolumn{3}{|c}{PCMB}&\multicolumn{3}{|c}{$S^2$TMB}&\multicolumn{3}{|c}{Proposal}\\	Network&Variables&Missing&Extra&Runtime&Missing&Extra&Runtime&Missing&Extra&Runtime&Missing&Extra&Runtime&Missing&Extra&Runtime\rule[0mm]{0mm}{7.5mm}\\\hline
					ASIA&8&1.25&0.00&251&1.75&0.63&117&1.75&0.63&163&0.25&0.50&888&0.00&3.50&13\\
					SACHS&11&1.91&0.00&1062&2.64&0.36&248&2.00&0.00&610&0.00&0.00&4842&0.00&2.55&12\\
					CHILD&20&1.75&0.05&6756&2.35&0.95&380&2.00&0.25&1191&0.05&0.05&6669&0.00&11.80&16\\
					WATER&32&3.59&0.00&407&4.00&0.19&140&3.78&0.31&260&2.03&1.47&29527&0.25&13.44&25\\
					ALARM&37&1.81&0.14&3832&2.38&0.57&281&2.19&0.19&1025&0.14&0.11&11272&0.05&10.92&39\\
					BARLEY&48&2.85&1.23&4928&3.46&0.42&269&3.19&0.42&830&1.15&0.46&99290&0.38&9.75&49\\
					\hline
					Average&&2.19&0.24&2872&2.76&0.52&239&2.48&0.30&680&0.60&0.43&25415&0.11&8.66&26\\
					\hline
				\end{tabular}
			}
		}
	\caption{Missing variables and extra variables, and runtimes (ms) of each method.}
	\end{table}
	
	\begin{table}[tp]
		\label{table:data_type}
		\centering
		\scalebox{1.0}[1.0]{
			\renewcommand{\arraystretch}{1.0}
			{\tabcolsep = 0.6mm
				\begin{tabular}{c|ccccc}
					\hline				
					&MMPC&HITON-PC&PCMB&$S^2$TMB&Proposal\\
					Average&0.6185&0.6219&0.6302&0.7980&0.8164\rule[0mm]{0mm}{4.5mm}\\
					\hline
				\end{tabular}
			}
		}
	\caption{Average classification accuracy of each method.}
	\end{table}

	Furthermore, we compare the accuracy of the proposed PC search method with those of the MMPC, HITON-PC, PCMB, and $S^2$TMB.
	We determine the ESS $N' \in \{1.0, 2.0, 5.0\}$ and the threshold $\delta \in \{3, 20, 150\}$ in the Bayes factor using 2-fold cross validation to obtain the highest classification accuracy.
	All the compared methods are implemented in Java.\footnote{Source code is available at \url{http://www.ai.lab.uec.ac.jp/software/}}
	This experiment uses six benchmark datasets from {\it bnlearn}: ASIA, SACHS, CHILD, WATER, ALARM, and BARLEY.
	From each benchmark network, we randomly generate sample data $N = 10,000$.
	Based on the generated data, we learn all the variables' PC sets using each method.
	Table 4 shows the average runtimes of each method.
	We calculate missing variables, representing the number of removed variables existing in the true PC set, and extra variables, which indicate the number of remaining variables that do not exist in the true PC set.
	Table 4 also shows the average missing and extra variables from the learned PC sets of all the variables.
	We compare the classification accuracies of the exact learning ANB with BDeu score (designated as {\it ANB-BDeu}) using each PC search method as a feature selection method.
	Table 5 shows the average accuracies of each method from the 43 UCI repository datasets listed in Table 1.
	
	From Table 4, the results show that the runtimes of the proposed method are shorter than those of the other methods.
	Moreover, the results show that the missing variables of the proposed method are smaller than those of the other methods.
	On the other hand, Table 4 also shows that the extra variables of the proposal are greater than those of the other methods in all datasets.
	From Table 5, the results show that the {\it ANB-BDeu} using the proposed method provides a much higher average accuracy than the other methods.
	This is because missing variables degrade classification accuracy more significantly than extra variables (Friedman et al., 1997).

	\begin{table}[tp]
		\label{table:data_type}
		\centering
			\scalebox{0.9}[0.9]{
				\renewcommand{\arraystretch}{0.9}
				{\tabcolsep = 0.6mm
					\begin{tabular}{llcccccc}
						\hline
						No.&Dataset&Variables&\shortstack{Sample\\size}&Classes&\shortstack{GBN-\\BDeu}&\shortstack{fsANB-\\BDeu}&\shortstack{The proposed\\PC search method}\rule[0mm]{0mm}{7.5mm}\\
						\hline 
						1&Balance Scale&5&625&3&169.4&23.0&6.3
\\
						2&banknote authentication&5&1372&2&19.3&10.3&2.0
\\
						3&Hayes--Roth&5&132&3&15.6&3.0&0.2
\\
						4&iris&5&150&3&16.7&5.0&0.2
\\
						5&lenses&5&24&3&15.3&1.0&0.1
\\
						6&Car Evaluation&7&1728&4&90.8&22.9&1.7
\\
						7&liver&7&345&2&21.1&15.6&0.3
\\
						8&MONK's Problems&7&432&2&31.0&20.7&0.5
\\
						9&mux6&7&64&2&18.9&9.1&0.1
\\
						10&LED7&8&3200&10&114.6&55.1&3.1
\\
						11&HTRU2&9&17898&2&300.5&251.3&10.2
\\
						12&Nursery&9&12960&3&707.4&525.8&5.8
\\
						13&pima&9&768&9&66.8&27.6&0.6
\\
						14&post&9&87&5&39.6&0.3&0.1
\\
						15&Breast Cancer&10&277&2&162.6&6.9&0.3
\\
						16&Breast Cancer Wisconsin&10&683&2&453.1&258.9&0.4
\\
						17&Contraceptive Method Choice&10&1473&3&161.1&121.4&0.8
\\
						18&glass&10&214&6&63.0&22.3&0.2
\\
						19&shuttle-small&10&5800&6&159.6&67.2&2.8
\\
						20&threeOf9&10&512&2&102.7&58.2&0.4
\\
						21&Tic-Tac-Toe&10&958&2&212.2&193.0&0.5
\\
						22&MAGIC Gamma Telescope&11&19020&2&979.8&277.2&5.3
\\
						23&Solar Flare&11&1389&9&379.4&17.2&0.9
\\
						24&heart&14&270&2&1988.6&299.8&0.1
\\
						25&wine&14&178&3&1233.7&585.0&0.1
\\
						26&cleve&14&296&2&2034.5&115.2&0.2
\\
						27&Australian&15&690&2&10700.3&927.6&0.3
\\
						28&crx&15&653&2&23069.5&2774.3&0.2
\\
						29&EEG&15&14980&2&12407.6&8248.8&4.1
\\
						30&Congressional Voting Records&17&232&2&11682.6&1623.6&0.2
\\
						31&zoo&17&101&5&7326.5&1985.1&0.1
\\
						32&pendigits&17&10992&10&84967.1&48636.9&3.4
\\
						33&letter&17&20000&26&339910.2&30224.8&6.3
\\
						34&ClimateModel&19&540&2&217457.0&12.0&0.3
\\
						35&Image Segmentation&19&2310&7&190895.9&103447.5&1.0
\\
						36&lymphography&19&148&4&107641.8&1171.4&0.2
\\
						37&vehicle&19&846&4&144669.5&62663.0&0.4
\\
						38&hepatitis&20&80&2&98841.9&821.6&0.1
\\
						39&German&21&1000&2&2706616.6&8885.1&0.5
\\
						40&bank&21&30488&2&15626734.5&130491.6&11.8
\\
						41&waveform-21&22&5000&3&10022030.7&757611.7&2.1
\\
						42&Mushroom&22&5644&2&4640293.5&2382657.7&2.3
\\
						43&spect&23&263&2&2553290.4&1386088.2&0.2
\\
						\hline
					\end{tabular}
				}
			}
		\caption{Runtimes (ms) of GBN-BDeu, fsANB-BDeu, and the proposed PC search method.}
	\end{table}

	\section{Experiments}
	This section presents numerical experiments conducted to evaluate the effectiveness of the exact learning ANB.
	First, we compare the classification accuracies of {\it ANB-BDeu} with those of the other methods in Section 3.
	We use the same experimental setup and evaluation method described in Section 3.
	The classification accuracies of {\it ANB-BDeu} are presented in Table 1.
	To confirm the significant differences of {\it ANB-BDeu} from the other methods, we apply Hommel's tests \citep{Hommel1988}, which are used as a standard in machine learning studies \citep{Demsar2006}.
	The $p$-values are presented at the bottom of Table 1.
	In addition, "MB size" in Table 2 denotes the average number of the class variable's Markov blanket size in the structures learned by {\it GBN-BDeu}.
	
	The results show that {\it ANB-BDeu} outperforms {\it Naive Bayes}, {\it GBN-CMDL}, {\it BNC2P}, {\it TAN-aCLL}, {\it gGBN-BDeu}, and {\it MC-DAGGES} at the $p < 0.1$ significance level.
	Moreover, the results show that {\it ANB-BDeu} improves the accuracy of {\it GBN-BDeu} when the class variable has numerous parents such as the No. 3, No. 9, and No. 31 datasets, as shown in Table 2.
	Furthermore, {\it ANB-BDeu} provides higher accuracies than {\it GBN-BDeu}, even for large data such as datasets 13, 22, 29, and 33 although the difference between {\it ANB-BDeu} and {\it GBN-BDeu} is not statistically significant.
	These actual datasets do not necessarily satisfy Assumptions 1 through 3 in Theorem 3.
	These results imply that the accuracies of {\it ANB-BDeu} without satisfying Assumptions 1 through 3 might be comparable to those of {\it GBN-BDeu} for large data.
	It is worth noting that the accuracies of {\it ANB-BDeu} are much worse than those provided by {\it GBN-BDeu} for datasets No. 5 and No. 12.
	"MB size" in these datasets are much smaller than the number of all feature variables, as shown in Table 2.
	The results show that feature selection by the Markov blanket is expected to improve the classification accuracies of the exact learning ANB, as described in Section 5.
	
	We compare the classification accuracies of {\it ANB-BDeu} using the PC search method proposed in Section 5 (referred to as "{\it fsANB-BDe}") with the other methods in Table 1.
	Table 1 shows the classification accuracies of {\it fsANB-BDe} and the $p$-values of Hommel's tests for differences in {\it fsANB-BDeu} from the other methods.
	The results show that {\it fsANB-BDeu} outperforms all the compared methods at the $p < 0.05$ significance level.
	
	"Max parents" in Table 2 presents the average maximum number of parents learned by {\it fsANB-BDeu}.
	The value of "Max parents" represents the complexity of the structure learned by { \it fsANB-BDeu}. 
	The results show that the accuracies of {\it Naive Bayes} are better than those of {\it fsANB-BDeu} when the sample size is small, such as the No. 36 and No. 38 datasets.
	In these datasets, the values of "Max parents" are large.
	The estimation of the variable parameters tends to become unstable when a variable has numerous parents, as described in Section 3.
	{\it Naive Bayes} can avoid this phenomenon because the maximum number of parents in {\it Naive Bayes} is one.
	However, {\it Naive Bayes} cannot learn relationships between the feature variables.
	Therefore, for large samples such as the No. 8 and No. 29 datasets, {\it Naive Bayes} shows much worse accuracy than those provided by other methods.
	
	Similar to {\it Naive Bayes}, {\it BNC2P} and {\it TAN-aCLL} show better accuracies than {\it fsANB-BDeu} for small samples such as the No. 38 dataset because the upper bound of the maximum number of parents is two in the two methods.
	However, the small upper bound of the maximum number of parents tends to lead to a poor representational power of the structure \citep{Ling2003}.
	As a result, the accuracies of both methods tend to be worse than those of the {\it fsANB-BDeu} of which the value of "Max parents" is greater than two, such as the No. 29 dataset.
	
	For large samples such as dataset Nos 29 and 33, {\it GBN-CMDL}, {\it gGBN-BDeu}, and {\it MC-DAGGES} show worse accuracies than {\it fsANB-BDeu} because the exact learning methods estimate the network structure more precisely than the greedy learned structure.
	
	We compare {\it fsANB-BDeu} and {\it ANB-BDeu}.
	The difference between the two methods is whether the proposed PC search method is used.
	"Removed variables" in Table 2 represents the average number of variables removed from the class variable's Markov blanket by our proposed PC search method.
	The results demonstrate that the accuracies of { \it fsANB-BDeu} tend to be much higher than those of { \it ANB-BDeu} when the value of "Removed variables" is large, such as Nos. 5, 12, 16, 34, and 38. 
	Consequently, discarding numerous irrelevant variables in the features improves the classification accuracy.
	
	Finally, we compare the runtimes of {\it fsANB-BDeu} and {\it GBN-BDeu} to demonstrate the efficiency of the ANB constraint.
	Table 6 presents the runtimes of {\it GBN-BDeu}, {\it fsANB-BDeu}, and the proposed PC search method.
	The results show that the runtimes of {\it fsANB-BDeu} are shorter than those of {\it GBN-BDeu} in all the datasets because the execution speed of the exact learning ANB is almost twice that of the exact learning GBN, as described in Section 4.
	Moreover, the runtimes of {\it fsANB-BDeu} are much shorter than those of {\it GBN-BDeu} when our PC search method removes many variables, such as the No. 34 and No. 39 datasets.
	This is because the runtimes of {\it GBN-BDeu} decrease exponentially with the removal of variables, whereas our PC search method itself has a negligibly small runtime compared to those of the exact learning as shown in Table 6.

	\section{Conclusions}
	First, this study compares the classification performances of the BNs exactly learned by BDeu as a generative model and those learned approximately by CLL as a discriminative model.
	Surprisingly, the results demonstrate that the performance of BNs achieved by maximizing ML was better than that of BNs achieved by maximizing CLL for large data.
	However, the results also show that the classification accuracies of the BNs that learned exactly by BDeu are much worse than those that learned by the other methods when the class variable  had numerous parents.
	To solve this problem, this study proposes an exact learning ANB by maximizing BDeu as a generative model.
	The proposed method asymptotically learns the optimal ANB, which is an I-map with the fewest parameters among all possible ANB structures.
	In addition, the proposed ANB is guaranteed to asymptotically estimate the identical conditional probability of the class variable to that of the exactly learned GBN.
	Based on these properties, the proposed method is effective for not only classification but also decision making, which requires a highly accurate probability estimate of the class variable.
	Furthermore, learning ANBs has lower computational costs than learning BNs does.
	The experimental results demonstrate that the proposed method significantly outperforms the approximately learned structure by maximizing CLL.
	
	We plan on exploring the following in future work.
	\begin{enumerate}
		\item It is known that neural networks are universal approximators, which means that they can approximate any functions to an arbitrary small error.
		However, \citet{Choi2019} showed that the functions induced by BN queries are polynomials.
		To improve their queries to become universal approximators, they proposed a testing BN, which chooses a parameter value depending on a threshold instead of simply having a fixed parameter value.
		We will apply our proposed method to the testing BN.
		\item Recent studies have developed methods for compiling BNCs into Boolean circuits that have the same input-output behavior \citep{Shih2018b, Shih2019}.
		We can explain and verify any BNCs by operating on their compiled circuits \citep{Darwiche2020a, Darwiche2020b, Shih2019}. 
		We will apply the compiling method to our proposed method.
	\end{enumerate}
	The above future works are expected to improve the classification accuracies and comprehensibility of our proposed method.

	\section*{Acknowledgments}
	Parts of this research were reported in an earlier conference paper published by \citet{Sugahara2018}.

	\section*{Appendix A}
	\renewcommand{\thesection}{\Alph{section}}
	\setcounter{section}{1}
	In this section, we provide the detailed algorithm of the exact learning ANB with BDeu score.
	As described in Section 4, our algorithm has the following steps.
	\begin{enumerate}
		\item For all possible pairs of a variable $X_i \in {\bf V} \setminus \{X_0\}$ and a variable set ${\bf Z} \subseteq {\bf V} \setminus \{X_i\}, (X_0 \in {\bf Z})$, calculate the local score $Score_i({\bf Z})$ (Equation (\ref{score})).
		
		\item For all possible pairs of a variable $X_i \in {\bf V} \setminus \{X_0\}$ and a variable set ${\bf Z} \subseteq {\bf V} \setminus \{X_i\}, (X_0 \in {\bf Z})$, calculate the best parents $g^*(\Pi({\bf Z}))$.
		
		\item For $\forall {\bf Z} \subseteq {\bf V}, (X_0 \in {\bf Z})$, calculate the sink $X_s^*({\bf Z})$.
		
		\item Calculate $G^*({\bf V})$ using Steps 3 and 4.
	\end{enumerate}
	Although our algorithm employs the approach provided by \cite{Silander2006}, the main differences are that our algorithm does not calculate the local scores of the parent sets without $X_0$ in Step 1 and does not search these parent sets in Step 2.
	Hereinafter, we explain how steps 1--4 can be accomplished within a reasonable time.
	
	\begin{algorithm} [tb]
		\caption{$GetLocalScores(jft, efvs)$} \label{alg2}
		\begin{algorithmic}
			\For{\textbf{all} $X \in Fvs(jft)$}
			\State $LS[X][Fvs(jft) \cup \{X_0\} \setminus \{X\}] \leftarrow Score(Jft2cft(jft, X))$
			\EndFor
			
			\If{$|Fvs(ct)| > 1$}
			\For{$j = 1$ \textbf{to} $|efvs|$}
			\State $GetLocalScores(Jft2jft(jft, X_i), \{efvs[1], \cdots, efvs[j - 1]\})$
			\EndFor
			\EndIf
		\end{algorithmic}
	\end{algorithm}

	\begin{algorithm} [tb]
		\caption{$GetBestParents({\bf V}, X_i, LS)$} \label{alg2}
		\begin{algorithmic}
			\State $bps =$ array $1$ to $2^{n - 2}$ of variable sets
			\State $bss =$ array $1$ to $2^{n - 2}$ of local scores
			\For{\textbf{all} $cs \subseteq ({\bf V} \setminus \{X_i\})$ such that $X_0 \in cs$ in lexicographic order}
			\State $bps[cs] \leftarrow cs$
			\State $bss[cs] \leftarrow LS[X_i][cs]$
			\For{\textbf{all} $cs1 \subset cs$ such that $X_0 \in cs1$ and $|cs \setminus cs1| = 1$}
			\If{$bss[cs1] > bss[cs]$}
			\State $bss[cs] \leftarrow bss[cs1]$
			\State $bps[cs] \leftarrow bps[cs1]$
			\EndIf
			\EndFor
			\EndFor
		\end{algorithmic}
	\end{algorithm}
	
	First, we calculate the joint frequency table of the entire variable set ${\bf V}$, and calculate the joint frequency tables of smaller variable subsets by marginalizing a variable, except $X_0$, from the joint frequency table.
	Using each joint frequency table, we calculate the conditional frequency tables for each variable, except $X_0$, given the other variables in the joint frequency table.
	We use these conditional frequency tables to calculate the local log BDeu scores.
	This process calculates the local scores only for the parent sets, including $X_0$, which satisfies the ANB constraint.
	
	We call $GetLocalScores$ described in Algorithm 1 with a joint frequency table $jft$ and the variables $efvs$, which are marginalized from $jft$ recursively.
	By initially calling $GetLocalScores$ with a joint frequency table for ${\bf V}$ and the variable set ${\bf V} \setminus \{X_0\}$ as $efvs$, we can calculate all the local scores required in Step 1.
	The algorithm calculates increasingly smaller joint frequency tables by a depth-first search.
	
	Algorithm 1 uses the following sub-function: $Fvs(jft)$ returns the set of variables except $X_0$ in the joint frequency table $jft$; $Jft2jft(jft, X)$ yields a joint frequency table, with the variable $X$ marginalized, from $jft$; $Jft2cft(jft,X)$ produces a conditional frequency table.
	The calculated scores for each (variable, parent set) pair are stored in $LS$.
	
	\begin{algorithm} [t]
		\caption{$GetBestSinks({\bf V}, bps, LS)$} \label{alg2}
		\begin{algorithmic}
			\For{\textbf{all} ${\bf Z} \subseteq {\bf V}$ such that $X_0 \in {\bf Z}$ in lexicographic order}
			\State $scores[{\bf Z}] \leftarrow 0.0$
			\State $sinks[{\bf Z}] \leftarrow -1$
			\For{\textbf{all} $sink \in {\bf Z} \setminus \{X_0\}$}
			\State $upvars \leftarrow {\bf Z} \setminus \{sink\}$
			\State $skore \leftarrow scores[upvars]$
			\State $skore \leftarrow skore + LS[sink][bps[sink][upvars]]$
			\If{$sinks[{\bf Z}] = -1$ \textbf{or} $skore > scores[{\bf Z}]$}
			\State $scores[{\bf Z}] \leftarrow skore$
			\State $sinks[{\bf Z}] \leftarrow sink$
			\EndIf
			\EndFor
			\EndFor
		\end{algorithmic}
	\end{algorithm}

	\begin{algorithm} [tb]
		\caption{$GetBestNet({\bf V}, bps, sinks)$} \label{alg2}
		\begin{algorithmic}
			\State $parents_{G^*} =$ array 1 to $n$ of variable sets
			\State $left = {\bf V}$
			\For{$i = 1$ \textbf{to} $n$}
			\State $X_s \leftarrow sink[left]$
			\State $left \leftarrow left \setminus \{X_s\}$
			\State $parents_{G^*}[X_s] \leftarrow bps[X_s][left]$
			\EndFor
		\end{algorithmic}
	\end{algorithm}

	After Step 1, we can find the best parents recursively from the calculated local scores.
	For a variable set ${\bf Z} \subseteq {\bf V}, (X_0 \in {\bf Z})$, the best parents of $X_i \in {\bf V} \setminus \{X_0\}$ in a candidate set $\Pi({\bf Z})$ are either $\Pi({\bf Z})$ itself or the best parents of $X_i$ in one of the smaller candidate sets $\{\Pi({\bf Z} \setminus \{X\}) \mid X \in ({\bf Z} \setminus \{X_0\})\}$.
	More formally, one can say that
	\begin{align}
	\label{bps}
	Score_i(g_i^*(\Pi({\bf Z}))) = \max(Score_i({\bf Z}), Score1({\bf Z})), 
	\end{align}
	where
	\begin{align}
	Score1({\bf Z}) = \max_{X \in {\bf Z} \setminus \{X_0\}} Score_i(g_i^*(\Pi({\bf Z} \setminus \{X\}))). \notag
	\end{align}
	Using this relation, Algorithm 2 finds all the best parents required in Step 2 by calculating the formula (\ref{bps}) in the lexicographic order of the candidate sets.
	The algorithm is called with the variable $X_i \in {\bf V} \setminus \{X_0\}$, the variable set ${\bf V}$, and the previously calculated local scores $LS$.
	The identified best parents and their local scores are stored in $bps$ and $bss$, respectively.
	
	As described previously, the best network $G^*({\bf Z})$ can be decomposed into the smaller best network $G^*({\bf Z} \setminus \{X_s^*({\bf Z})\})$ and the best sink $X_s^*({\bf Z})$ with incoming edges from $g_s^*(\Pi({\bf Z} \setminus \{X_s^*({\bf Z})\})$.
	Again, using this idea, Algorithm 3 finds all the best sinks required in Step 3 by calculating the formula (\ref{alg:1}) in the lexicographic order of the variable sets.
	The identified best sinks are stored in $sinks$.

	At the end of the recursive decomposition using Equation (\ref{alg:1}), we can identify the best network $G^*({\bf V})$, as described in Algorithm 4.

	\vskip 0.2in
	\bibliography{JMLR2020_arXiv}

\begin{thebibliography}{50}
\providecommand{\natexlab}[1]{#1}
\providecommand{\url}[1]{\texttt{#1}}
\expandafter\ifx\csname urlstyle\endcsname\relax
  \providecommand{\doi}[1]{doi: #1}\else
  \providecommand{\doi}{doi: \begingroup \urlstyle{rm}\Url}\fi

\bibitem[Pen(2007)]{Pena2007}
Towards scalable and data efficient learning of {M}arkov boundaries.
\newblock \emph{International Journal of Approximate Reasoning}, 45\penalty0
  (2):\penalty0 211--232, 2007.
\newblock ISSN 0888-613X.
\newblock \doi{https://doi.org/10.1016/j.ijar.2006.06.008}.

\bibitem[Acid et~al.(2005)Acid, de~Campos, and Castellano]{Acid2005}
Silvia Acid, Luis~M. de~Campos, and Javier~G. Castellano.
\newblock Learning {B}ayesian {N}etwork {C}lassifiers: {S}earching in a {S}pace
  of {P}artially {D}irected {A}cyclic {G}raphs.
\newblock \emph{Machine Learning}, 59\penalty0 (3):\penalty0 213--235, 2005.
\newblock \doi{10.1007/s10994-005-0473-4}.

\bibitem[Aliferis et~al.(2003)Aliferis, Tsamardinos, and
  Statnikov]{Aliferis2003}
Constantin~F. Aliferis, Ioannis Tsamardinos, and Alexander Statnikov.
\newblock {HITON}: {A} {N}ovel {M}arkov {B}lanket {A}lgorithm for {O}ptimal
  {V}ariable {S}election.
\newblock \emph{AMIA Annual Symposium proceedings}, pages 21--25, 2003.

\bibitem[Barlett and Cussens(2013)]{Barlett2013}
Mark Barlett and James Cussens.
\newblock Advances in {B}ayesian {N}etwork {L}earning {U}sing {I}nteger
  {P}rogramming.
\newblock In \emph{Proceedings of the Twenty-Ninth Conference on Uncertainty in
  Artificial Intelligence}, pages 182--191, 2013.

\bibitem[Buntine(1991)]{Buntine1991}
Wray Buntine.
\newblock Theory {R}efinement on {B}ayesian {N}etworks.
\newblock In \emph{Proceedings of the Seventh Conference on Uncertainty in
  Artificial Intelligence}, pages 52--60, San Francisco, CA, USA, 1991. Morgan
  Kaufmann Publishers Inc.

\bibitem[Carvalho et~al.(2013)Carvalho, Ad^^c3^^a3o, and Mateus]{Carvalho2013}
Alexandra~M. Carvalho, Pedro Ad^^c3^^a3o, and Paulo Mateus.
\newblock Efficient {A}pproximation of the {C}onditional {R}elative {E}ntropy
  with {A}pplications to {D}iscriminative {L}earning of {B}ayesian {N}etwork
  {C}lassifiers.
\newblock \emph{Entropy}, 15\penalty0 (7):\penalty0 2716--2735, 2013.

\bibitem[Chickering(2002)]{Chickering2002}
David~Maxwell Chickering.
\newblock Learning {E}quivalence {C}lasses of {B}ayesian-network {S}tructures.
\newblock \emph{Journal of Machine Learning Research}, 2:\penalty0 445--498,
  2002.
\newblock ISSN 1532-4435.
\newblock \doi{10.1162/153244302760200696}.

\bibitem[Choi et~al.(2019)Choi, Wang, and Darwiche]{Choi2019}
Arthur Choi, Ruocheng Wang, and Adnan Darwiche.
\newblock On the relative expressiveness of {B}ayesian and neural networks.
\newblock \emph{International Journal of Approximate Reasoning}, 113:\penalty0
  303--323, 2019.
\newblock ISSN 0888-613X.
\newblock \doi{https://doi.org/10.1016/j.ijar.2019.07.008}.

\bibitem[Cover and Thomas(1991)]{coverinfo-1991}
Thomas~M. Cover and Joy~A. Thomas.
\newblock \emph{Elements of {I}nformation {T}heory}.
\newblock Wiley-Interscience, 1991.
\newblock ISBN 0471241954.

\bibitem[Cussens(2012)]{Cussens2012}
James Cussens.
\newblock Bayesian network learning with cutting planes.
\newblock In \emph{Proceedings of the 27th Conference on Uncertainty in
  Artificial Intelligence}, pages 153--160, 2012.

\bibitem[Darwiche(2020)]{Darwiche2020b}
Adnan Darwiche.
\newblock Three {M}odern {R}oles for {L}ogic in {AI}.
\newblock In \emph{Proceedings of the 39th ACM SIGMOD-SIGACT-SIGAI Symposium on
  Principles of Database Systems}, pages 229--243, 2020.
\newblock ISBN 9781450371087.
\newblock \doi{10.1145/3375395.3389131}.

\bibitem[Darwiche and Hirth(2020)]{Darwiche2020a}
Adnan Darwiche and Auguste Hirth.
\newblock On {T}he {R}easons {B}ehind {D}ecisions, 2020.

\bibitem[de~Campos and Ji(2011)]{deCampos2011}
Cassio~P. de~Campos and Qiang Ji.
\newblock Efficient {S}tructure {L}earning of {B}ayesian {N}etworks {U}sing
  {C}onstraints.
\newblock \emph{Journal of Machine Learning Research}, 12\penalty0
  (12):\penalty0 663--689, 2011.
\newblock ISSN 1532-4435.

\bibitem[de~Campos et~al.(2014)de~Campos, Cuccu, Corani, and
  Zaffalon]{deCampos2014}
Cassio~P. de~Campos, Marco Cuccu, Giorgio Corani, and Marco Zaffalon.
\newblock Extended {T}ree {A}ugmented {N}aive {C}lassifier.
\newblock In Linda~C. van~der Gaag and Ad~J. Feelders, editors,
  \emph{Proceedings of the 7th European Workshop on Probabilistic Graphical
  Models}, pages 176--189, Cham, 2014. Springer International Publishing.

\bibitem[Dem\v{s}ar(2006)]{Demsar2006}
Janez Dem\v{s}ar.
\newblock Statistical {C}omparisons of {C}lassifiers over {M}ultiple {D}ata
  {S}ets.
\newblock \emph{Journal of Machine Learning Research}, 7:\penalty0 1--30, 2006.

\bibitem[Friedman et~al.(1997)Friedman, Geiger, and Goldszmidt]{Friedman1997}
Nir Friedman, Dan Geiger, and Moises Goldszmidt.
\newblock Bayesian {N}etwork {C}lassifiers.
\newblock \emph{Machine Learning}, 29\penalty0 (2):\penalty0 131--163, 1997.

\bibitem[Gao and Ji(2017)]{Gao2017}
Tian Gao and Qiang Ji.
\newblock Efficient score-based {M}arkov {B}lanket discovery.
\newblock \emph{International Journal of Approximate Reasoning}, 80:\penalty0
  277--293, 2017.
\newblock ISSN 0888-613X.
\newblock \doi{https://doi.org/10.1016/j.ijar.2016.09.009}.

\bibitem[Greiner and Zhou(2002)]{Greiner2002}
Russell Greiner and Wei Zhou.
\newblock Structural {E}xtension to {L}ogistic {R}egression: {D}iscriminative
  {P}arameter {L}earning of {B}elief {N}et {C}lassifiers.
\newblock In \emph{Eighteenth National Conference on Artificial Intelligence},
  pages 167--173, 2002.

\bibitem[Grossman and Domingos(2004)]{Grossman2004}
Daniel Grossman and Pedro Domingos.
\newblock Learning {B}ayesian {N}etwork classifiers by maximizing conditional
  likelihood.
\newblock In \emph{Proceedings, Twenty-First International Conference on
  Machine Learning, ICML 2004}, pages 361--368, 2004.

\bibitem[Heckerman et~al.(1995)Heckerman, Geiger, and
  Chickering]{Heckerman1995}
David Heckerman, Dan Geiger, and David~M. Chickering.
\newblock Learning {B}ayesian {N}etworks: {T}he {C}ombination of {K}nowledge
  and {S}tatistical {D}ata.
\newblock \emph{Machine Learning}, 20\penalty0 (3):\penalty0 197--243, 1995.

\bibitem[Hommel(1988)]{Hommel1988}
Gerhard Hommel.
\newblock A {S}tagewise {R}ejective {M}ultiple {T}est {P}rocedure {B}ased on a
  {M}odified {B}onferroni {T}est.
\newblock \emph{Biometrika}, pages 383--386, 1988.

\bibitem[Kass and Raftery(1995)]{Kass1995}
Robert~E. Kass and Adrian~E. Raftery.
\newblock Bayes {F}actors.
\newblock \emph{Journal of the American Statistical Association}, 90\penalty0
  (430):\penalty0 773--795, 1995.

\bibitem[Koivisto and Sood(2004)]{Koivisto2004}
Mikko Koivisto and Kismat Sood.
\newblock Exact {B}ayesian {S}tructure {D}iscovery in {B}ayesian {N}etworks.
\newblock \emph{Journal of Machine Learning Research}, 5:\penalty0 549--573,
  2004.

\bibitem[Koller and Friedman(2009)]{Koller2009}
Daphne Koller and Nir Friedman.
\newblock \emph{Probabilistic {G}raphical {M}odels: {P}rinciples and
  {T}echniques}.
\newblock MIT Press, 2009.
\newblock ISBN 0262013193.

\bibitem[Lichman(2013)]{Lichman2013}
M.~Lichman.
\newblock {UCI} {M}achine {L}earning {R}epository, 2013.
\newblock URL \url{http://archive.ics.uci.edu/ml}.

\bibitem[Ling and Zhang(2003)]{Ling2003}
Charles~X. Ling and Huajie Zhang.
\newblock The {R}epresentational {P}ower of {D}iscrete {B}ayesian {N}etworks.
\newblock \emph{Journal of Machine Learning Research}, 3:\penalty0 709--721,
  2003.

\bibitem[Lucas(2002)]{Lucas2002}
Peter J.~F. Lucas.
\newblock Restricted {B}ayesian {N}etwork {S}tructure {L}earning.
\newblock In \emph{Proceedings of the First European Workshop on Probabilistic
  Graphical Models}, 2002.

\bibitem[Malone et~al.(2011)Malone, Yuan, Hansen, and Bridges]{Malone2011}
Brandon~M. Malone, Changhe Yuan, Eric~A. Hansen, and Susan Bridges.
\newblock Improving the {S}calability of {O}ptimal {B}ayesian {N}etwork
  {L}earning with {E}xternal-{M}emory {F}rontier {B}readth-{F}irst {B}ranch and
  {B}ound {S}earch.
\newblock In \emph{Proceedings of Uncertainty in Artificial Intelligence},
  2011.

\bibitem[Mihaljevi\'{c} et~al.(2018)Mihaljevi\'{c}, Bielza, and
  Larra\~{n}aga]{Mihaljevic2018a}
Bojan Mihaljevi\'{c}, Concha Bielza, and Pedro Larra\~{n}aga.
\newblock Learning {B}ayesian network classifiers with completed partially
  directed acyclic graphs.
\newblock In \emph{Proceedings of the Ninth International Conference on
  Probabilistic Graphical Models}, volume~72 of \emph{Proceedings of Machine
  Learning Research}, pages 272--283, 2018.

\bibitem[Minsky(1961)]{Minsky1961}
Marvin Minsky.
\newblock Steps toward {A}rtificial {I}ntelligence.
\newblock In \emph{Proceedings of the IRE}, volume~49, pages 8--30, 1961.

\bibitem[Natori et~al.(2015)Natori, Uto, Nishiyama, Kawano, and
  Ueno]{Natori2015}
Kazuki Natori, Masaki Uto, Yu~Nishiyama, Shuichi Kawano, and Maomi Ueno.
\newblock Constraint-{B}ased {L}earning {B}ayesian {N}etworks {U}sing {B}ayes
  {F}actor.
\newblock In \emph{Proceedings of the Second International Workshop on Advanced
  Methodologies for {B}ayesian Networks -- Volume 9505}, pages 15--31, 2015.

\bibitem[Natori et~al.(2017)Natori, Uto, and Ueno]{Natori2017}
Kazuki Natori, Masaki Uto, and Maomi Ueno.
\newblock Consistent {L}earning {B}ayesian {N}etworks with {T}housands of
  {V}ariables.
\newblock In \emph{Proceedings of Machine Learning Research}, volume~73, pages
  57--68, 2017.

\bibitem[Niinim\"{a}ki and Parviainen(2012)]{Niinimaki2012}
Teppo Niinim\"{a}ki and Pekka Parviainen.
\newblock Local {S}tructure {D}iscovery in {B}ayesian {N}etworks.
\newblock In \emph{Proceedings of the Twenty-Eighth Conference on Uncertainty
  in Artificial Intelligence}, UAI'12, pages 634--643. AUAI Press, 2012.
\newblock ISBN 9780974903989.

\bibitem[Scutari(2010)]{Scutari2010}
Marco Scutari.
\newblock Learning {B}ayesian {N}etworks with the bnlearn {R} {P}ackage.
\newblock \emph{Journal of Statistical Software, Articles}, 35\penalty0
  (3):\penalty0 1--22, 2010.
\newblock ISSN 1548-7660.
\newblock \doi{10.18637/jss.v035.i03}.

\bibitem[Shih et~al.(2018)Shih, Choi, and Darwiche]{Shih2018b}
Andy Shih, Arthur Choi, and Adnan Darwiche.
\newblock A {S}ymbolic {A}pproach to {E}xplaining {B}ayesian {N}etwork
  {C}lassifiers.
\newblock In \emph{Proceedings of the Twenty-Seventh International Joint
  Conference on Artificial Intelligence, {IJCAI-18}}, pages 5103--5111, 2018.
\newblock \doi{10.24963/ijcai.2018/708}.

\bibitem[Shih et~al.(2019)Shih, Choi, and Darwiche]{Shih2019}
Andy Shih, Arthur Choi, and Adnan Darwiche.
\newblock Compiling {B}ayesian {N}etwork {C}lassifiers into {D}ecision
  {G}raphs, 2019.

\bibitem[Silander and Myllym\"{a}ki(2006)]{Silander2006}
Tomi Silander and Petri Myllym\"{a}ki.
\newblock A {S}imple {A}pproach for {F}inding the {G}lobally {O}ptimal
  {B}ayesian {N}etwork {S}tructure.
\newblock In \emph{Proceedings of Uncertainty in Artificial Intelligence},
  pages 445--452, 2006.

\bibitem[Silander et~al.(2007)Silander, Kontkanen, and
  Myllym\"{a}ki]{Silander2007}
Tomi Silander, Petri Kontkanen, and Petri Myllym\"{a}ki.
\newblock On {S}ensitivity of the {MAP} {B}ayesian {N}etwork {S}tructure to the
  {E}quivalent {S}ample {S}ize {P}arameter.
\newblock In \emph{Proceedings of the Twenty-Third Conference on Uncertainty in
  Artificial Intelligence}, UAI'07, pages 360--367, 2007.

\bibitem[Singh and Moore(2005)]{Singh2005}
Ajit~P. Singh and Andrew~W. Moore.
\newblock Finding optimal {B}ayesian networks by dynamic programming.
\newblock Technical report, Technical Report, Carnegie Mellon University, 2005.

\bibitem[Steck(2008)]{Steck2008}
Harald Steck.
\newblock Learning the {B}ayesian {N}etwork {S}tructure: {D}irichlet {P}rior
  vs. {D}ata.
\newblock In \emph{{UAI} 2008, Proceedings of the 24th Conference in
  Uncertainty in Artificial Intelligence, Helsinki, Finland, July 9--12, 2008},
  pages 511--518, 2008.

\bibitem[Steck and Jaakkola(2002)]{steck-bf}
Harald Steck and Tommi~S. Jaakkola.
\newblock On the {D}irichlet {P}rior and {B}ayesian {R}egularization.
\newblock In \emph{Proceedings of the 15th International Conference on Neural
  Information Processing Systems}, NIPS'02, pages 713--720. MIT Press, 2002.

\bibitem[Sugahara et~al.(2018)Sugahara, Uto, and Ueno]{Sugahara2018}
Shouta Sugahara, Masaki Uto, and Maomi Ueno.
\newblock Exact learning augmented naive {B}ayes classifier.
\newblock In \emph{Proceedings of the 9th International Conference on
  Probabilistic Graphical Models}, volume~72 of \emph{Proceedings of Machine
  Learning Research}, pages 439--450. PMLR, 2018.

\bibitem[Sullivan and Feinn(2012)]{gailetal-2012}
Gail~M. Sullivan and Richard Feinn.
\newblock Using {E}ffect {S}ize -- or {W}hy the {P} {V}alue {I}s {N}ot
  {E}nough.
\newblock \emph{Journal of Graduate and Medical Education}, 4\penalty0
  (3):\penalty0 279--282, 2012.

\bibitem[Suzuki(2017)]{Suzuki2017}
Joe Suzuki.
\newblock A theoretical analysis of the {BDeu} scores in {B}ayesian network
  structure learning.
\newblock \emph{Behaviormetrika}, 44\penalty0 (1):\penalty0 97--116, 2017.

\bibitem[Tsamardinos et~al.(2006)Tsamardinos, Brown, and
  Aliferis]{Tsamardinos2006}
Ioannis Tsamardinos, Laura~E. Brown, and Constantin~F. Aliferis.
\newblock The {M}ax-min {H}ill-climbing {B}ayesian {N}etwork {S}tructure
  {L}earning {A}lgorithm.
\newblock \emph{Machine Learning}, 65\penalty0 (1):\penalty0 31--78, 2006.

\bibitem[Ueno(2008)]{Ueno2008}
Maomi Ueno.
\newblock Learning likelihood-equivalence {B}ayesian networks using an
  empirical {B}ayesian approach.
\newblock \emph{Behaviormetrika}, 35\penalty0 (2):\penalty0 115--135, 2008.

\bibitem[Ueno(2010)]{Ueno2010}
Maomi Ueno.
\newblock Learning {N}etworks {D}etermined by the {R}atio of {P}rior and
  {D}ata.
\newblock In \emph{Proceedings of Uncertainty in Artificial Intelligence},
  pages 598--605, 2010.

\bibitem[Ueno(2011)]{Ueno2011}
Maomi Ueno.
\newblock Robust learning {B}ayesian networks for prior belief.
\newblock In \emph{Proceedings of Uncertainty in Artificial Intelligence},
  pages 689--707, 2011.

\bibitem[Verma and Pearl(1990)]{Verma90}
Thomas Verma and Judea Pearl.
\newblock Equivalence and {S}ynthesis of {C}ausal {M}odels.
\newblock In \emph{Proceedings of the Sixth Annual Conference on Uncertainty in
  Artificial Intelligence}, UAI '90, pages 255--270, USA, 1990. Elsevier
  Science Inc.
\newblock ISBN 0444892648.

\bibitem[Yuan and Malone(2013)]{Yuan2013}
Changhe Yuan and Brandon Malone.
\newblock Learning {O}ptimal {B}ayesian {N}etworks: {A} {S}hortest {P}ath
  {P}erspective.
\newblock \emph{Journal of Artificial Intelligence Research}, 48\penalty0
  (1):\penalty0 23--65, 2013.

\end{thebibliography}
	\vskip 1.2in	
	
\end{document}